\definecolor{green}{rgb}{0.0, 0.5, 0.0}
\definecolor{lightbrown}{rgb}{0.71, 0.4, 0.11}
\newtheorem{definition}{Definition}[section]
\newtheorem{proposition}{Proposition}[section]
\newtheorem{corollary}{Corollary}[section]
\newtheorem{claim}{Claim}[section]
\newtheorem{theorem}{Theorem}[section]
\newtheorem{lemma}{Lemma}[section]
\newtheorem{example}{Example}
\newtheorem{remark}{Remark}
\newcommand{\LRA}{\Leftrightarrow}
\newcommand{\PSPACE}{\textsc{Pspace}}
\newcommand{\EXPTIME}{\textsc{EXPtime}}
\newcommand{\EXPSPACE}{\textsc{EXPspace}}
\newenvironment{proof}{\noindent{\bf Proof:}}{\hfill\rule{2mm}{2mm}\\}
\newcommand{\tP}{{\texttt{P}}}
\newcommand{\tA}{{\texttt{A}}}
\newcommand{\EAL}{{\texttt{EAL}}}
\newcommand{\tEPDL}{{\texttt{EPDL}}}
\newcommand{\tUM}{{\texttt{UM}}}
\newcommand{\tB}{{\texttt{B}}}
\newcommand{\B}{{\texttt{B}}}
\newcommand{\calS}{{\cal S}}
\newcommand{\calR}{{\cal R}}
\newcommand{\calV}{{\cal V}}
\newcommand{\calU}{{\cal U}}
\newcommand{\calM}{{\cal M}}
\newcommand{\calN}{{\cal N}}
\newcommand{\N}{{\cal N}}
\renewcommand{\S}{{\cal S}}
\newcommand{\R}{{\cal R}}
\newcommand{\V}{{\cal V}}
\newcommand{\U}{{\cal U}}
\renewcommand{\L}{{\cal L}}
\newcommand{\EPDL}{{\texttt{EPDL}}}
\newcommand{\EPDLsf}{{\texttt{EPDL$^-$}}}
\newcommand{\lrp}[1]{\llparenthesis #1 \rrparenthesis}
\newcommand{\etsM}{\mathfrak{M}}
\newcommand{\llrr}[1]{\llbracket #1\rrbracket}
\newcommand{\M}{\mathcal{M}}
\newcommand{\lr}[1]{\langle #1 \rangle}
\newcommand{\lra}{\leftrightarrow}
\newcommand{\rel}[1]{\stackrel{#1}{\rightarrow}}
\newcommand{\tr}[1]{\text{#1}}
\newcommand{\SELA}{\mathbb{SELA}}
\newcommand{\DIST}[1]{\ensuremath{\mathtt{DIST}(#1)}}
\newcommand{\DISTK}{\ensuremath{\mathtt{DISTK}}}
\newcommand{\TAUT}{\ensuremath{\mathtt{TAUT}}}
\newcommand{\GEN}[1]{\ensuremath{\mathtt{NEC}(#1)}}
\newcommand{\GENK}{\ensuremath{\mathtt{NECK}}}
\newcommand{\MP}{\ensuremath{\mathtt{MP}}}
\newcommand{\ZIG}[1]{\ensuremath{\mathtt{PR}(#1)}}
\newcommand{\ZAG}[1]{\ensuremath{\mathtt{NM}(#1)}}
\newcommand{\AxTr}{\ensuremath{\mathtt{T}}}
\newcommand{\AxTrans}{\ensuremath{\mathtt{4}}}
\newcommand{\AxEuc}{\ensuremath{\mathtt{5}}}
\newcommand{\OBS}[1]{\ensuremath{\mathtt{OBS}(#1)}}
\newcommand{\SUB}{\ensuremath{\mathtt{SUB}}}
\newcommand{\xrel}{\rel}
\newcommand{\qt}{QT}
\title{A Dynamic Epistemic Framework for Conformant Planning}
\author{Quan Yu \institute{Department of Computer Science, Sun Yat-sen University, China}\institute{Qiannan Normal College for Nationalities, China}
\and Yanjun Li \institute{Department of Philosophy, Peking University, China} \institute{ Faculty of Philosophy, University of Groningen, The Netherlands}\and  Yanjing Wang\thanks{Corresponding author} \institute{Department of Philosophy, Peking University, China}}
\begin{document}

\maketitle

\begin{abstract}In this paper, we introduce a lightweight dynamic epistemic logical framework for automated planning under initial uncertainty. We reduce plan verification and conformant planning to model checking problems of our logic. We show that the model checking problem of the iteration-free fragment is PSPACE-complete. By using two non-standard (but equivalent) semantics, we give novel model checking algorithms to the full language and the iteration-free language.
\end{abstract}

\section{Introduction}
\textit{Conformant planning} is the problem of finding a linear plan (a sequence of action) to achieve a goal in presence of  uncertainty about the initial state (cf. \cite{SW98}). For example, suppose that you are a rookie spy trapped in a foreign hotel with the following map at hand:\footnote{It is a variant of the running example in \cite{wang2012}.}

\begin{large}
$$\xymatrix{
&s_6&{{s_7\textrm{:\scriptsize{Safe}}}}&{{s_8\textrm{:\scriptsize{Safe}}}} &\\
s_1\ar[r]|r& s_2\ar[r]|r\ar[u]|u& s_3\ar[r]|r\ar[u]|u&{s_4\textrm{:\scriptsize{Safe}}}\ar[r]|r\ar[u]|u&s_5
\save "2,2"."2,3"!C="g1"*+[F--:<+20pt>]\frm{}
\restore
}$$
\end{large}

Now somebody spots you and sets up the alarm. In this case you need to move fast to one of the safe hiding places marked in the map (i.e., $s_7, s_8$ and $s_4$). However, since you were in panic, you lost your way and you are not sure whether you are at $s_2$ or $s_3$ (denoted by the circle in the above graph). Now what should you do in order to reach  a safe place quickly? Clearly, merely moving $r$ or moving $u$ may not guarantee your safety given the uncertainty. A simple plan is to move $r$ first and then $u$, since this plan will take you to a safe place, no matter where you actually are initially. This plan is \textit{conformant} since it does not require any feedback during the execution and it should work in presence of uncertainty about the initial state. More generally, a conformant plan should also work given actions with non-deterministic effects. Such a conformant plan is crucial when there are no feedbacks\slash observations available during the execution of the plan.\footnote{In many other cases, feedbacks may be just too `expensive' to obtain during a plan aiming for quick actions \cite{bonet2010conformant}.} Note that since no information is provided during the execution, the conformant plan is simply a finite sequence of actions without any conditional moves.

As discussed in \cite{BonetG00,PalaciosG06}, conformant planning can be reduced to classical planning, the planning problem without any initial uncertainty, over the space of \textit{belief states}. Intuitively, a belief state is a subset of the state space, which records the uncertainty during the execution of a plan, e.g., $\{s_2, s_3\}$ is an initial belief state in the above example. In order to make sure a goal is achieved eventually, it is crucial to track the transitions of belief states during the execution of the plan, and this may traverse exponentially many belief states in the size of the original state space. As one may expect, conformant planning is computationally harder than classical planning. The complexity of checking the existence of a conformant plan is \EXPSPACE-complete in the size of the variables generating the state space \cite{HaslumJ99}. In the literature, people proposed compact and implicit representations of the belief spaces, such as OBDD \cite{CimattiR00, CimattiRB04,Cimatti11} and CNF \cite{ToSP10}, and different heuristics are used to guide the search for a plan, 
e.g., \cite{BrafmanH04,BryceKS06}.
\medskip

Besides the traditional AI approaches, we can also take an epistemic-logical perspective on planning in presence of initial uncertainties, based on dynamic epistemic logic (DEL) (cf. e.g., \cite{DHK2007}). The central philosophy of DEL takes the meaning of an action as the change it brings to the knowledge of the agents. Intuitively, this is what we need to track  the belief states during the execution of a plan\footnote{Here the belief states are actually about knowledge in epistemic logic.}. Indeed, in recent years, there has been a growing interest in using DEL to handle multi-agent planning with knowledge goals (cf.\ e.g., \cite{BA2011,LEA11,andersen2012conditional,aucher2012sequents,YWL13,
pardo2013strong}), while the traditional AI planning focuses on the single-agent case. In particular, the event models of DEL (cf.\ \cite{BaMo04}) are used to handle non-public actions that may cause different knowledge updates to different agents. In these DEL-based planning frameworks, states are epistemic models, actions are event models and the state transitions are implicitly encoded by the update product which computes a new epistemic model based on an epistemic model and an event model.

One advantage of this approach is its expressiveness in handling scenarios which require reasoning about agents' higher-order knowledge about each other in presence of partially observable actions. However, this expressiveness comes at a price, as shown in \cite{BA2011,aucher2013undecidability}, that multi-agent epistemic planning is undecidable in general. Many interesting decidable fragments are found in the literature \cite{BA2011,LEA11,YWL13,andersen2013don}, which suggests that the single-agent cases and restrictions on the form of event models are the key to decidability. However, if we focus on the single-agent planning, a natural question arises: how do we compare such DEL approaches with the traditional AI planning? It seems that the DEL-based approaches are more suitable for planning with actions that change (higher-order) knowledge rather than planning with fact-changing actions, although the latter type of actions can also be handled in DEL. Moreover, the standard models of DEL are purely epistemic thus do not encode the temporal information of available actions directly. This may limit the applicability of such approaches to planning problems based on transition systems.

\medskip

In this paper, we tackle the standard single-agent conformant planning problem over transition systems, by using the core idea of DEL, but not its  standard formalism. Our formal framework is based on the logic proposed by Wang and Li in \cite{wang2012}, where the model is simply a transition system with initial uncertainty as in the motivating example, and an action is interpreted in the semantics as an update on the uncertainty of the agent. Our contributions are summarized as follows:\\


\begin{itemize}
\item  A lightweight dynamic epistemic framework with a simple language and a complete axiomatization. 
\item   Non-trivial reduction of conformant planning to a model checking problem using our language with programs.
\item  Two novel model checking algorithms based on two alternative semantics for the proposed logic,  which make the context-dependency in the original semantics explicit. 
\item The complexity of model checking the iteration-free fragment of our language is \PSPACE-complete. The model checking problem of the full language is in \EXPTIME. The model checking problem of the conformant planning is in \PSPACE. 
\end{itemize}

The last result may sound contradictory to the aforementioned result that the complexity of conformant planning is \EXPSPACE-complete. Actually, the apparent contradiction is due to the fact that the \EXPSPACE\ complexity result is based on the number of \textit{state variables} which require an exponential blow up to generate an explicit transition system that we use here. We will come back to this issue at the end of Section \ref{sec.cp}.

\medskip

Our approach has the following advantages compared to the existing planning approaches: \\

\begin{itemize}
\item The planning goals can be specified as arbitrary formulas in an epistemic language.  Extra plan constraints (e.g., what actions to use) can be expressed explicitly by programs in the language. Therefore it may cover a richer class of (conformant) planning problems compared to the traditional AI approach where a goal is Boolean.\footnote{The goal in the standard conformant planning is simply a set of different valuations of basic propositional variables. Our approach can even handle epistemic goals in negative forms, e.g., we want to make sure the agent knows something but does not know too much in the end. }
\item The plans can be specified as regular expressions with tests in terms of arbitrary EPDL formulas, which generalizes the knowledge-based programs in \cite{FHMV97:KbasedP,LangZ12}.
\item By reducing conformant planning to a model checking problem in an explicit logical language, we also see the subtleties hidden in the planning problem. In principle, there are various model checking techniques to be applied to  conformant planning based on this reduction.
\item Our logical language and models are very simple compared to the standard action-model based DEL approach, yet we can encode the externally given  executability of the actions in the model, inspired by epistemic temporal logic (ETL) \cite{RAK,ParikhIS}. 
\item Our approach is flexible enough to provide, in the future, a unified platform to compare different planning problems under uncertainty. By studying different fragments of the logical language and model classes, we may categorize planning problems according to their complexity.
\end{itemize}

The rest of the paper is organized as follows: We introduce our basic logical framework and its axiomatization in Section \ref{sec.basic}, and extend it in Section \ref{sec.pdl} with programs to handle the conformant planning. The complexity analysis of the model checking problems is in Section \ref{sec.mc} and we conclude in Section \ref{sec.conc} with future directions.

\section{Basic framework}\label{sec.basic}

\subsection{Epistemic action language}

To talk about the knowledge of the agent during an execution of a plan, we use the following language proposed in \cite{wang2012}.


\begin{definition}[Epistemic Action Language (\EAL)] Given a countable set \tA\ of action symbols  and a countable set \tP\ of atomic proposition letters , the language $\EAL^{\tA}_{\tP}$ is defined as follows:\footnote{We do need unboundedly many action symbols to encode the desired problem in the later discussion of model checking complexity. }
\[\phi ::= \top \mid p \mid \neg\phi \mid (\phi\wedge\phi) \mid [a ]\phi \mid K\phi,\]
where $p\in \tP$, $a \in \tA$. The following standard abbreviations are used: $\bot:=\neg\top$,  $\phi\vee\psi:=\neg(\neg \phi\wedge\neg \psi),\phi\rightarrow\psi:=\neg\phi\vee\psi,
\lr{a}\phi:=\neg [a]\neg\phi,
\hat{K}\phi:=\neg K\neg\phi$.
\end{definition}

$K\phi$ says that the agent knows that $\phi$, and $[a]\phi$ expresses that if the agent can move forward by action $a$, then after doing $a$, $\phi$ holds. Throughout the paper, we fix some $\tP$ and $\tA$, and refer to $\EAL^{\tA}_{\tP}$ by $\EAL$.\\

The size of $\EAL$-formulas (notation $|\varphi|$) is defined inductively:
$|\top|=|p|=1$; $|\neg \phi|=1+|\phi|$; $|\phi \wedge \psi|=1+ |\phi|+|\psi|$; $|K\phi|=|[a]\phi|=1+|\phi|$.
The set of subformulas of $\phi\in\EAL$, denoted as $sub(\phi)$, is defined as usual.

\begin{definition}[Uncertainty map] Given $\tP$ and $\tA$, a (multimodal) Kripke model $\calN$ is a tuple $\langle \calS, \{\calR_{a}\mid a\in \tA\}, \calV \rangle$, where $\calS$ is a non-empty set of states, $\calR_{a}\subseteq \calS \times \calS$ is a binary relation labelled by $a$,  $\calV: \calS \rightarrow 2^{\tP}$
is a valuation function. An uncertainty map $\M$ is a Kripke model $\langle \calS,\{\calR_{a}\mid a\in \tA\}, \calV \rangle$ with a non-empty set $\calU\subseteq \calS$. Given an uncertainty map $\M$, we refer to its components by $\S_\M$, ${\R_a}_{\M}$, $\V_\M$, and $\U_\M$. A pointed uncertainty map $\M,s$ is an uncertainty map $\M$ with a designated state $s\in \U_\M$. We write $s \rel{a} t$ for $(s,t)\in \calR_{a}$.
 \end{definition}

Intuitively, a Kripke model encodes a map (transition system) and the uncertainty set $\U$ encodes the uncertainty that the agent has about where he is in the map. The graph mentioned at the beginning of the introduction is a typical example of an uncertainty map. Note that there may be non-deterministic transitions in the model, i.e., there may be $t_1\not= t_2$ such that $s\rel{a}t_1$ and $s\rel{a}t_2$ for some $s, t_1, t_2$.

\begin{remark}
It is crucial to notice that the designated state in a pointed uncertainty map must be one of the states in the uncertainty set.
\end{remark}

\begin{definition}[Semantics]
Given any uncertainty map $\calM=\langle \calS, \{\calR_{a}\mid a\in \tA\}, \calV, \calU \rangle$ and any state $s\in \calU$, the semantics is defined as follows:
$$\begin{array}{|lcl|}
\hline
\M,s\vDash \top &  &\tr{always}\\
\M,s\vDash p &\iff& s\in \V(p)\\
\M,s\vDash \neg\phi &\iff& \M,s\nvDash \phi\\
\M,s\vDash \phi\wedge\psi &\iff& \M,s\vDash\phi \text{ and } \M,s\vDash \psi\\
\M,s\vDash [a]\phi &\iff& \forall t\in S: s\rel{a}t \text{ implies } \M|^a,t\vDash \phi\\
\M,s\vDash K\phi &\iff& \forall u\in\U:\M,u\vDash \phi\\
\hline
\end{array}
$$

\noindent where $\M|^a=\lr{\S,\{\R_{a}\mid a\in \tA\},
\V, \U|^a}$ and $\U |^a=\{r'\mid \exists r\in \U \text{ such that }r\rel{a}r' \}$. We say $\phi$ is valid (notation: $\vDash\phi$) if it is true on all the pointed uncertainty maps. For a action sequence $\sigma=a_1\dots a_n$, we write $\U|^{\sigma}$ for $(\dots((\U|^{a_1})|^{a_2})\dots)|^{a_n}$. and write $\M|^{\sigma}$ for $(\dots((\M|^{a_1})|^{a_2})\dots)|^{a_n}$.
\end{definition}

Intuitively, the agent `carries' the uncertainty set with him when moving forward and obtains a new uncertainty set $\U|^a$. Note that here we differ from \cite{wang2012} where the updated uncertainty set is further refined according to what the agent can observe at the new state. For conformant planning, we do not consider the observational power of the agent during the execution of a plan.

Let us call the model mentioned in the introduction $\M$, it is not hard to see that $\M|^r$ and $(\M|^r)|^u$ are as follows:
$$\xymatrix{
&s_6&{{s_7\textrm{:\scriptsize{Safe}}}}&{{s_8\textrm{:\scriptsize{Safe}}}} &\\
s_1\ar[r]|r& s_2\ar[r]|r\ar[u]|u& s_3\ar[r]|r\ar[u]|u&{s_4\textrm{:\scriptsize{Safe}}}\ar[r]|r\ar[u]|u&s_5
\save "2,3"."2,4"!C="g1"*+[F--:<+20pt>]\frm{}
\restore
}$$

$$\xymatrix{
&s_6&{{s_7\textrm{:\scriptsize{Safe}}}}&{{s_8\textrm{:\scriptsize{Safe}}}} &\\
s_1\ar[r]|r& s_2\ar[r]|r\ar[u]|u& s_3\ar[r]|r\ar[u]|u&{s_4\textrm{:\scriptsize{Safe}}}\ar[r]|r\ar[u]|u&s_5
\save "1,3"."1,4"!C="g1"*+[F--:<+20pt>]\frm{}
\restore
}$$

Thus we have:
\begin{itemize}
\item $\M,s_3\vDash [r] (\textit{Safe}\land \neg K \textit{Safe})$
\item $\M,s_3\vDash K [r][u] (\textit{Safe}\land K \textit{Safe})$
\end{itemize}

The usual global model checking algorithm for modal logics labels the states with the subformulas that are true on the states. However, this cannot work here since the truth value of epistemic formulas on the states outside $\U$ is simply  undefined. Moreover, the exact truth value of an epistemic formula on a state depends on `how you get there', as the following example shows (the underlined states mark the actual states):
\begin{center}
\begin{minipage}{0.20\textwidth}
$$\xymatrix{\underline{s_1}\ar[r]|b\ar[d]|a&s_3:p\ar[d]|a\\
s_2\ar[ur]|a&s_4
\save "1,1"."2,1"!C="g1"*+[F--:<+20pt>]\frm{}
\restore}$$
\end{minipage}
\begin{minipage}{0.05\textwidth}
$$\rel{b}$$
\end{minipage}
\begin{minipage}{0.2\textwidth}
$$\xymatrix{s_1\ar[r]|b\ar[d]|a&\underline{s_3}:p\ar[d]|a\\
s_2\ar[ur]|a&s_4
\save "1,2"."1,2"!C="g1"*+[F--:<+20pt>]\frm{}
\restore}$$
\end{minipage}
\end{center}

\begin{center}
\begin{minipage}{0.20\textwidth}
$$\xymatrix{\underline{s_1}\ar[r]|b\ar[d]|a&s_3:p\ar[d]|a\\
s_2\ar[ur]|a&s_4
\save "1,1"."2,1"!C="g1"*+[F--:<+20pt>]\frm{}
\restore}$$
\end{minipage}
\begin{minipage}{0.05\textwidth}
$$\rel{a}\quad \rel{a}$$
\end{minipage}
\begin{minipage}{0.2\textwidth}
$$\xymatrix{s_1\ar[r]|b\ar[d]|a&\underline{s_3}:p\ar[d]|a\\
s_2\ar[ur]|a&s_4
\save "1,2"."2,2"!C="g1"*+[F--:<+20pt>]\frm{}
\restore}$$
\end{minipage}
\end{center}
\medskip

Let the left-hand-side model be $\M$ then it is clear that $\M|^b,s_3\vDash Kp$ while $\M|^{aa},s_3\nvDash Kp$ thus $\M,s_1\vDash \lr{b}Kp\land\lr{a}\lr{a}\neg Kp.$ This shows that the truth value of an epistemic subformula w.r.t. a state in the model is somehow `context-dependent', which requires new techniques in model checking. We will make this explicit in Section~\ref{semanticsth} when we discuss the model checking algorithm.
\medskip

\subsection{Axiomatization}

Following the axioms proposed in \cite{wang2012}, we give the following axiomatization for \EAL\ w.r.t.\ our semantics:

{\small \begin{center}
\begin{tabular}{lclc}
\multicolumn{4}{c}{System $\SELA$}\\
\multicolumn{2}{l}{\textbf{Axioms}}&\textbf{Rules}&\\
\TAUT & \tr{all axioms of propositional logic}&\MP & $\dfrac{\phi,\phi\to\psi}{\psi}$\\
\DISTK & $K(p\to q)\to (Kp\to Kq)$&\GENK &$\dfrac{\phi}{K\phi}$\\
\DIST{a} & $[a](p\to q)\to ([a]p\to [a]q)$ &$\GEN{a}$ &$\dfrac{\phi}{[a]\phi} $\\
\AxTr& $Kp\to p $ &\SUB &$\dfrac{\phi(p)}{\phi(\psi)}$ \\
 \AxTrans& $Kp \to KKp$&\phantom{$\dfrac{\phi}{\Box\phi}$}\\
 \AxEuc& $\neg Kp\to K\neg Kp$&\phantom{$\dfrac{\phi}{\Box\phi}$}\\
\ZIG{a} & $K[a]p\to [a]Kp$ &&\phantom{$\dfrac{\phi}{\Box\phi}$}\\
\ZAG{a} & $\lr{a}Kp\to K[a]p$ & \phantom{$\dfrac{\phi}{\Box\phi}$}\\
\end{tabular}
\end{center}
}
\noindent where $a$ ranges over $\tA$, $p, q$ range over $\tP$. $\ZIG{\cdot}$ and $\ZAG{\cdot}$ denote the axioms of \textit{perfect recall} and \textit{no miracles} respectively (cf.\ \cite{WC12}).

Note that since we do not assume that the agent can observe the available actions, the axiom $\OBS{a}: K\lr{a}\top\lor K \neg \lr{a}\top$ in \cite{wang2012} is abandoned. Due to the same reason, the axiom of no miracles is also simplified.

We show the completeness of $\SELA$ using a more direct proof strategy compared to the one used in \cite{wang2012}.
\begin{theorem}\label{theo:completeness}
$\SELA$ is sound and strongly complete w.r.t.\ $\EAL$ on uncertainty maps.
\end{theorem}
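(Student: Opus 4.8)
The plan is to prove soundness by a direct semantic check of the axioms and rules, and strong completeness by a canonical model construction in which the uncertainty set is taken to be the epistemic equivalence class of the root, with the effect of the dynamic modality tracked by a key lemma about how $\U$ evolves under updates.

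\emph{Soundness.} I would verify each schema on an arbitrary pointed uncertainty map $\M,s$ with $s\in\U_\M$. The propositional part and $\MP$ are routine. Since $K$ quantifies universally over $\U$, it behaves like an $S5$ box whose accessibility is the total relation on $\U$, so $\DISTK$, $\AxTr$, $\AxTrans$, $\AxEuc$ hold (here $\AxTr$ uses $s\in\U$ crucially), and $\GENK$ preserves validity because every $u\in\U$ is itself a legitimate designated point. For $\DIST{a}$ one notes that $[a]$ is a normal box over $\R_a$ evaluated in the single updated model $\M|^a$, and $\GEN{a}$ preserves validity because $s\in\U$ forces every $a$-successor $t$ into $\U|^a=\U_{\M|^a}$, making $\M|^a,t$ a genuine pointed map. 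The only non-routine cases are $\ZIG{a}$ and $\ZAG{a}$: for $\ZIG{a}$, any $r'\in\U|^a$ has an $\R_a$-predecessor in $\U$, so $K[a]p$ forces $p$ at $r'$; for $\ZAG{a}$, every $a$-successor of every state of $\U$ lies in $\U|^a$, on which $Kp$ at the witness guarantees $p$. Closure of validity under $\SUB$ is standard since the semantics is compositional.

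\emph{Canonical model.} By Lindenbaum it suffices to satisfy an arbitrary maximal $\SELA$-consistent set $\Gamma_0$. I would take as states all maximal consistent sets, set $\Gamma\rel{a}\Delta$ iff $\{\phi\mid[a]\phi\in\Gamma\}\subseteq\Delta$, let $\V(p)=\{\Gamma\mid p\in\Gamma\}$, and introduce the auxiliary canonical epistemic relation $\Gamma\,\R_K\,\Delta$ iff $\{\phi\mid K\phi\in\Gamma\}\subseteq\Delta$; by $\AxTr,\AxTrans,\AxEuc$ this $\R_K$ is an equivalence relation. The uncertainty map is then obtained by setting $\U_0=[\Gamma_0]_{\R_K}$, the $\R_K$-class of the root, which contains $\Gamma_0$ by reflexivity.

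\emph{The main obstacle} is that the dynamic modality changes the model, so I must control how $\U$ evolves. The key lemma is that whenever a finite sequence $\sigma$ leads to a state $\Delta\in\U_0|^{\sigma}$, one has $\U_0|^{\sigma}=[\Delta]_{\R_K}$: the semantically updated uncertainty set always coincides with the epistemic class of the reached state. By induction on $\sigma$ this reduces to the single-step claim $\U|^a=[\Delta]_{\R_K}$ when $\U$ is an $\R_K$-class and $\Delta\in\U|^a$, which follows from two confluence properties of the canonical frame: $\R_a\circ\R_K\subseteq\R_K\circ\R_a$ (yielding $[\Delta]_{\R_K}\subseteq\U|^a$) and a ``no-miracles'' closure stating that all $a$-successors of $\R_K$-related states are again $\R_K$-related (yielding $\U|^a\subseteq[\Delta]_{\R_K}$). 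I would establish both directly from the axioms rather than via Sahlqvist correspondence. For the first, given $\Gamma\rel{a}\Delta$ and $\Delta\,\R_K\,\Theta$, the set $\{\psi\mid K\psi\in\Gamma\}\cup\{\lr{a}\eta\mid\eta\in\Theta\}$ is consistent, since otherwise $\vdash K\psi\to[a]K\neg\eta$ via $\GENK$, $\DISTK$ and $\ZIG{a}$ for appropriate conjunctions $\psi,\eta$, forcing the contradiction $\neg\eta\in\Theta$; any completion $\Xi$ then witnesses $\Gamma\,\R_K\,\Xi\rel{a}\Theta$. For the second, from $K\psi\in W$ with $\Gamma\rel{a}W$ one gets $\lr{a}K\psi\in\Gamma$, hence $K[a]\psi\in\Gamma$ by $\ZAG{a}$, so $\psi$ propagates to every $a$-successor of every $\R_K$-successor of $\Gamma$.

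\emph{Truth Lemma and conclusion.} With the key lemma in hand, I would prove by induction on $\phi$, with $\sigma$ universally quantified, that for every $\sigma$ and every $\Delta\in\U_0|^{\sigma}$ we have $\M|^{\sigma},\Delta\vDash\phi$ iff $\phi\in\Delta$. The Boolean cases are immediate; the $K\psi$ case uses $\U_0|^{\sigma}=[\Delta]_{\R_K}$ together with the existence lemma for $\R_K$; and the $[a]\psi$ case uses $\M|^{\sigma}|^a=\M|^{\sigma a}$ together with the fact that every $\R_a$-successor $t$ of $\Delta$ lies in $\U_0|^{\sigma a}$, so the induction hypothesis applies to $\psi$ at $t$ along $\sigma a$, and concludes by the existence lemma for $\R_a$. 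Taking $\sigma$ empty yields $\M,\Gamma_0\vDash\phi$ iff $\phi\in\Gamma_0$, so $\M,\Gamma_0$ satisfies $\Gamma_0$, establishing strong completeness.
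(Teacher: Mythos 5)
Your proof is correct and follows essentially the same route as the paper: a canonical model of maximal consistent sets whose uncertainty set is the epistemic equivalence class of the root, a key lemma showing that updating this class along $a$ yields exactly the epistemic class of the reached state (your two confluence properties are precisely the paper's two inclusions of its Claim, proved with \ZIG{a} and \ZAG{a} respectively, plus the S5 axioms), and a truth lemma by induction using the existence lemmas. The only cosmetic difference is that you track $\U_0|^{\sigma}$ along update sequences from a fixed root, whereas the paper assigns to each MCS $s$ its own uncertainty map $\M^c_s$ and shows $\M^c_s|^a=\M^c_t$ whenever $s\rel{a}t$.
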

\begin{proof}
To prove that $\SELA$ is sound on uncertainty maps, we need to show that all the axioms are valid and all the inference rules preserve validity. Since the uncertainty set in an \tUM\ denotes an equivalent class, axioms \AxTr, \AxTrans\ and \AxEuc\ are valid; due to the semantics, the validity of axioms $\ZIG{\cdot}$ and $\ZAG{\cdot}$ can be proved step by step; others can be proved as usual.

To prove that $\SELA$ is strongly complete on uncertainty maps, we only need to show that every $\SELA$-consistent set of formulas is satisfiable on some uncertainty map. The proof idea is that we construct an uncertainty map consisting of maximal $\SELA$-consistent sets (MCSs),  and then with the Lindenbaum-like lemma that every $\SELA$-consistent set of formulas can be extended in to a MCS (we omit the proof here), we only need to prove that every formula holds on the MCS to which it belongs.

Firstly, we construct a canonical Kripke model $\N^c=\lr{\calS^c,\{\calR^c_a\mid a\in\tA\},\calV^c}$ as follows:
\begin{itemize}
\item $\S^c$ is the set of all MCSs;
\item $s\R^c_a t\iff$  $\langle a\rangle\phi\in s$ for any $\phi\in t$ (equivalently $\phi\in t$ for any $[a]\phi\in s$);
\item $\V^c(p)=\{s\mid p\in s\}$.
\end{itemize}
Given $s\in\calS^c$, we define $\calU^c_s=\{u\in\calS^c\mid K\phi\in s $ iff $K\phi\in u\}$, and it is obvious that $s\in \calU^c_s$. Thus we have that for each $s\in\S^c$, $\M^c_s=\lr{\N^c,\U^c_s}$ is an uncertainty map, and $\M^c_s,s$ is a pointed uncertainty map.

Secondly, we prove the following claim.
\begin{claim}\label{claim:uncertainty}
If $s\rel{a}t$, then we have $\calU^c_s|^a=\calU^c_t$.
\end{claim}
$\subseteq$: Assuming $v\in\U^c_s|^a$, we need to show $v\in\U^c_t$, namely we need to show that $K\phi\in v\iff K\phi\in t$. Since  $v\in\U^c_s|^a$, we have that there is $u\in\U^c_s$ such that $u \R^c_a v$. If $K\phi\in t$, it follows by axiom \AxTrans\ that $KK\phi\in t$. Thus we have $\lr{a}KK\phi\in s$. By axiom \ZAG{a}, it follows that $K[a]K\phi\in s$. By $u\in\U^c_s$ and axiom \AxTr, we have $[a]K\phi\in u$. It follows by $u \R^c_a v$ that $K\phi\in v$. If $K\phi\not\in t$, we have $\neg K\phi\in t$. By axiom \AxEuc, we have $K\neg K\phi\in t$. Similarly, we have $\neg K\phi\in v$. Thus we have $K\phi\not\in v$.

$\supseteq$: Assuming $v\in \U^c_t$, we need to show $v\in\U^c_s|^a$, namely there is $u\in \U^c_s$ such that $u \R^c_a v$. Let $u^-$ be $\{K\phi\mid K\phi\in s \}\cup\{\lr{a}\psi\mid\psi\in v \}$. Then $u^-$ is consistent. For suppose not, we have $\vdash K\phi_1\wedge\dots\wedge K\phi_n\to [a]\neg\psi_1\vee\dots\vee [a]\neg\psi_k$ for some $n$ and $k$. Since $\vdash[a]\neg\psi_1\vee\dots\vee [a]\neg\psi_k\to[a](\neg\psi_1\vee\dots\vee\neg\psi_k)$, we have $\vdash K\phi_1\wedge\dots\wedge K\phi_n\to[a](\neg\psi_1\vee\dots\vee\neg\psi_k)$. By rule \GENK\ and axiom \DISTK, we have  $\vdash KK\phi_1\wedge\dots\wedge KK\phi_n\to K[a](\neg\psi_1\vee\dots\vee\neg\psi_k)$. Since $KK\phi_i\in s$ for each $1\leq i\leq n$, we have $K[a](\neg\psi_1\vee\dots\vee\neg\psi_k)\in s$. By axiom \ZIG{a}, it follows that $[a]K(\neg\psi_1\vee\dots\vee\neg\psi_k)\in s$. It follows by $s\R^c_a t$ that $K(\neg\psi_1\vee\dots\vee\neg\psi_k)\in t$. Since $v\in \U^c_t$, by axiom \AxTr, we have $\neg\psi_1\vee\dots\vee\neg\psi_k\in v$. This is contrary with $\psi_i\in v$ for each $1\leq i\leq k$. Thus $u^-$ is consistent. By Lindenbaum-like Lemma, there exists a MCS $u$ extending $u^-$. It follows by $u^-\subseteq u$ that $u\in\U^c_s$ and $u\R^c_a v$. We conclude that $v\in\U^c_s|^a$.

Finally, we will show that $\calM^c_s,s\vDash\phi$ iff $\phi\in s$. we prove it by induction on $\phi$. Please note that the `existence lemmas' (that $\neg[a]\phi\in s$ implies $\neg\phi\in t$ for some $t$ such that $s\rel{a}t$ and that $\neg K\phi\in s$ implies $\neg\phi\in s'$ for some $s'\in \U^c_s$) also hold in the model $\N^c$. We only focus on the case of $[a]\phi$. With Claim~\ref{claim:uncertainty}, it follows that $\M^c_t=\M^c_s|^a$ if $s\rel{a}t$. Then by the induction hypothesis and the existence lemmas, it is easy to show that $\M^c_s,s\vDash[a]\phi $ iff $[a]\phi\in s$.
\end{proof}

\section{An extension of EAL for conformant planning}
\label{sec.pdl}
\subsection{Epistemic PDL over uncertainty maps}
In this section we extend the language of $\EAL$ with programs in propositional dynamic logic and use this extended language to express the existence of a conformant plan.
\begin{definition}[Epistemic PDL]
The \textit{Epistemic PDL Language} (\EPDL)  is defined as follows:
\begin{center}
$\phi::=\top\mid p\mid \neg\phi\mid(\phi\wedge\phi)\mid [\pi]\phi \mid K\phi$\\
$\pi::=a\mid {?\phi}\mid  (\pi;\pi)\mid (\pi+\pi)\mid \pi^* $
\end{center}
\noindent where $p\in\tP$, $a\in \tA$. We use $\lrp{\pi}\phi$ to denote $[\pi]\phi\land \lr{\pi}\phi$,  which is logically equivalent to $[\pi]\phi\land \lr{\pi}\top$. Given a finite $\B\subseteq \tA$, we write $\B^*$ for $(\Sigma_{a\in \B}a)^*$, i.e., the iteration over the `sum' of all the action symbols in $\B$. The size of \EPDL\ formulas\slash programs is given by:
$|[\pi]\phi|=|\pi|+|\phi|$, $|a|=1$, $|\pi_1;\pi_2|=1+|\pi_1|+|\pi_2|$, $|?\phi|=|\pi^*|=1+|\phi|$, and $|\pi_1+\pi_2|=1+|\pi_1|+|\pi_2|$.
\end{definition}


Given any uncertainty map $\M=\lr{\S,\{\R_{a}\mid a\in \tA\},\V,\U}$,  any state $s\in \U$, the semantics is given by a mutual induction on $\phi$ and $\pi$ (we only show the case about $[\pi]\phi$, other cases are as in \EAL):

\[
\begin{array}{|rcl|}
\hline
\M,s\vDash [\pi]\phi &\LRA & \textrm{for all }\M',s': (\M,s)\llbracket\pi\rrbracket(\M',s')\\
					 && \text{ implies } \M',s'\vDash\phi\\
(\M,s)\llbracket a \rrbracket(\M',s') & \LRA & \M'=\M|^a \textrm{ and } s\rel{a} s' \\
(\M,s)\llbracket ?\psi\rrbracket(\M',s') & \LRA & (\M',s')=(\M,s)\textrm{ and } \M,s\vDash \psi \\
(\M,s)\llbracket \pi_1;\pi_2\rrbracket(\M',s') &\LRA & (\M,s)\llbracket \pi_1\rrbracket\circ\llbracket\pi_2\rrbracket(\M',s')\\
(\M,s)\llbracket \pi_1+\pi_2\rrbracket(\M',s') & \LRA & (\M,s)\llbracket \pi_1\rrbracket \cup \llbracket\pi_2\rrbracket(\M',s')\\
(\M,s)\llbracket \pi^*\rrbracket(\M',s') &\LRA & (\M,s)\llbracket \pi\rrbracket^\star (\M',s')\\
\hline
\end{array}
\]

\noindent where $\circ,\cup$, $^\star$ at the right-hand side denote the usual composition, union and reflexive transitive closure of binary relations respectively. Clearly this semantics coincides with the semantics of \EAL\ on \EAL\ formulas.

Note that each program $\pi$ can be viewed as a set of computation sequences, which are sequences of actions in $\tA$ and tests with $\phi\in\EPDL$:
$$\begin{array}{l}
\L(a)=\{a\}\\
\L(?\phi)=\{?\phi\}\\
\L(\pi;\pi')= \{\sigma\eta \mid \sigma \in \L(\pi) \mbox{ and } \eta \in \L(\pi')\}\\
\L({\pi} + \pi')=\L(\pi)\cup\L(\pi')\\
\L(\pi^*)=\{\epsilon\}\cup\bigcup_{n> 0}(\L(\underbrace{\pi\cdots\pi}_n)) \text{ where $\epsilon$ is the empty sequence}
\end{array}$$

Here are some valid formulas which are useful in our latter discussion:
\begin{center}
\begin{tabular}{r@{ $\lra$ }l}
$\lr{\pi;\pi'}\phi$ & $\lr{\pi}\lr{\pi'}\phi$\\
$[\pi + \pi']\phi$& $ [\pi]\phi \land [\pi']\phi$\\
$[?\psi]\phi$ & ($\psi\to \phi$)\\
\end{tabular}
\end{center}

We leave the complete axiomatization of \EPDL\ on uncertainty maps to future work.

\subsection{Conformant planning via model checking EPDL}
\begin{definition}[Conformant planning]
Given an uncertainty map $\M$, a goal formula $\phi\in\EPDL$, and a set $\B\subseteq\tA$, the conformant planning problem is to find a finite (possibly empty) sequence $\sigma=a_1a_2\cdots a_n\in\L(\B^*)$ such that for each $u\in \U_\M$ we have $\M,u\vDash \lrp{a_1}\lrp{a_2}\cdots\lrp{a_n}\phi$. The existence problem of conformant planning is to test whether such a sequence exists.
\end{definition}

Recall that $\lrp{\pi}\phi$ is the shorthand of $[\pi]\phi\land\lr{\pi}\phi$. Intuitively, we want a plan which is both executable and safe w.r.t.\ non-deterministic actions and initial uncertainty of the agent. It is crucial to observe the difference between $\lrp{a_1}\lrp{a_2}\cdots\lrp{a_n}\phi$ and $\lrp{a_1;a_2;\cdots;a_n}\phi$ by the following example:

\begin{example}
Given uncertainty map $\M$ depicted as follows, we have $\M,s_1\vDash\lrp{a;b}p$ but $\M,s_1\nvDash\lrp{a}\lrp{b}p$.
$$
\xymatrix@R-20pt{
&{s_2}\ar[r]|b&{s_4: p}\\
{s_1}\ar[ur]|a\ar[dr]|a\\
&{s_3}
\save "2,1"!C="g1"*+[F--:<+20pt>]\frm{}
\restore
}
$$
\end{example}

Given $\M$ and $\phi$, to verify whether $\sigma\in \L(\pi)$ is a conformant plan can be formulated as the model checking problem: $\M,u\vDash K \lrp{a_1}\lrp{a_2}\cdots\lrp{a_n}\phi$. On the other hand, the existence problem of a conformant plan is more complicated to formulate: it asks whether there \textit{exists} a $\sigma\in\L(\B^*)$ such that it can be verified as a conformant plan. The simple-minded attempt would be to check whether $\M,u\vDash K \lr{\B^*}\phi$ holds. Despite the $\lr{\cdot}$-vs.-$\lrp{\cdot}$ distinction, $K\lr{\B^*}\phi$ may hold on a model where the sequences to guarantee $\phi$ on different states in $\U_M$ are different, as the following example shows:

\begin{example}
Given uncertainty map $\M$ depicted as follows, let the goal formula be $p$ and $\B=\{a,b\}$. We have $\M,s_1\vDash K \lr{\B^*}p$, but there is no solution to this conformant planning problem.
$$
\xymatrix@R-10pt{
{s_1}\ar[r]|a&{s_3}\ar[r]|b&{s_5: p}\\
{s_2}\ar[r]|b&{s_4}\ar[r]|a&{s_6: p}
\save "1,1"."2,1"!C="g1"*+[F--:<+20pt>]\frm{}
\restore
}
$$
\end{example}

The right formula to check for the existence of a conformant plan w.r.t.\ $\B\subseteq\tA$ and $\phi\in\EPDL$ is: $$\theta_{\B,\phi}=\lr{(\Sigma_{a\in\B}(?K\lr{a}\top;a))^*}K\phi.$$ For example, if $\B=\{a_1,a_2\}$ then $\theta_{\B,\phi}=\lr{((?K\lr{a_1}\top;a_1)+(?K\lr{a_2}\top;a_2))^*}K\phi$. Intuitively, the confrmant plan consists of actions that are always executable given the uncertainty of the agent (guaranteed by the guard $K\lr{a}\top$). In the end the plan should also make sure that $\phi$ must hold given the uncertainty of the agent (guaranteed by $K\phi$). In the following, we will prove that this formula is indeed correct.

\medskip

First, we observe that the rule of substitution of equivalents is valid ($\phi(\psi\slash \chi)$ is obtained by replacing any occurrence of $\chi$ by $\psi$, similar for $\llbracket \pi(\psi/\chi) \rrbracket$):
\begin{proposition}\label{prop:substitution}
If $\vDash\psi\leftrightarrow\chi$, then:
\begin{itemize}
\item[(1)] $\vDash\phi\leftrightarrow\phi(\psi/\chi)$;
\item[(2)] $\llbracket \pi \rrbracket=\llbracket \pi(\psi/\chi) \rrbracket$.
\end{itemize}
\end{proposition}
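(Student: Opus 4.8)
The plan is to prove both parts simultaneously by a mutual induction on the structure of the formula $\phi$ and the program $\pi$, since formulas and programs are defined by mutual recursion (formulas contain programs inside $[\pi]\phi$, and programs contain formulas inside tests $?\phi$); the induction is well-founded on the size measure $|\cdot|$ defined above. The two statements feed into each other: part (1) applied to a test subformula is what is needed in the test case of part (2), and part (2) applied to a program is what is needed in the $[\pi]\phi$ case of part (1). The key design point is that the inductive hypothesis must be the \emph{global} statement — that the relevant formulas are equivalent at \emph{every} pointed uncertainty map, and that the program denotations coincide as relations — rather than a local claim at a single point, because under the modalities the semantics moves to transformed maps $\M|^a$ and to other points $u\in\U_\M$.

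First I would dispose of the outermost base case of (1): if $\phi$ is syntactically $\chi$, then $\phi(\psi/\chi)=\psi$ and the conclusion $\vDash\phi\leftrightarrow\phi(\psi/\chi)$ is exactly the hypothesis. Otherwise the substitution is pushed into the immediate subexpressions and I treat each constructor in turn. The Boolean cases ($\top$, $p$, $\neg\phi_1$, $\phi_1\wedge\phi_2$) are immediate, since truth at a pointed map is computed pointwise and the hypothesis already gives agreement at that very map. For $K\phi_1$ I use that $\M,s\vDash K\phi_1$ quantifies over all $u\in\U_\M$; since $\vDash\phi_1\leftrightarrow\phi_1(\psi/\chi)$ holds at every pointed map, in particular at each $\M,u$, the two universally quantified conditions coincide. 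For $[\pi]\phi_1$ I combine part (2) for $\pi$ (so the accessibility relation $\llbracket\pi\rrbracket$ is unchanged by the substitution) with part (1) for $\phi_1$ (so $\phi_1$ and its substituted form agree at every reachable $(\M',s')$), and conclude that the two box formulas agree everywhere.

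For part (2) the induction on $\pi$ is then straightforward: the atomic action $a$ contains no formula, so nothing changes; for the test $?\theta$ the relation $\llbracket ?\theta\rrbracket$ holds exactly when $(\M',s')=(\M,s)$ and $\M,s\vDash\theta$, and part (1) applied to $\theta$ gives $\vDash\theta\leftrightarrow\theta(\psi/\chi)$, so the test fires on exactly the same pointed maps; and the composite cases $\pi_1;\pi_2$, $\pi_1+\pi_2$, $\pi_1^*$ follow because composition, union and reflexive-transitive closure of relations are determined by the component relations, which coincide by the inductive hypothesis. The only real subtlety — and the step I would be most careful about — is the interplay that forces the statement to be global: since evaluating a subformula under $[\pi]$ or $K$ sends us to a different (transformed) uncertainty map or to a different point of $\U_\M$, a merely local equivalence at the current point would not propagate, so the inductive hypothesis must be kept in its validity form throughout, relying on $\vDash\psi\leftrightarrow\chi$ being truth at \emph{all} pointed uncertainty maps. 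With this in hand the test, $K$, and $[\pi]\phi$ cases all go through cleanly.
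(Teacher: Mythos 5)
Your proof is correct. Note that the paper itself states Proposition~\ref{prop:substitution} without any proof (it is introduced merely as an observation), so there is nothing to diverge from: your mutual induction on $\phi$ and $\pi$, keeping the inductive hypothesis in its global validity form so that it survives the passage to transformed maps $\M|^a$ and to other points of $\U_\M$, is precisely the standard argument the paper implicitly relies on. The only step worth making explicit is that every pair $(\M',s')$ reachable via $\llbracket\pi\rrbracket$ from a pointed uncertainty map is again a pointed uncertainty map (i.e., $s'\in\U_{\M'}$, which holds since $\U_{\M|^a}=\U_\M|^a$ contains every $a$-successor of a state in $\U_\M$), as this is what licenses applying the validity-form hypothesis at those pairs.
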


\begin{proposition}\label{prop:basicCase}
	$\vDash K\lrp{a}\phi\leftrightarrow\lr{?K\lr{a}\top;a}K\phi$
\end{proposition}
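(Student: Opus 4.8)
The plan is to prove this validity directly, without induction, by fixing an arbitrary pointed uncertainty map $\M,s$ (so that $s\in\U_\M$) and unfolding the semantics of both sides until each reduces to the same set-theoretic condition, phrased in terms of which states lie in $\U_\M$ and in its $a$-image $\U|^a$.

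First I would unfold the left-hand side. Since $\lrp{a}\phi=[a]\phi\land\lr{a}\top$ and $K$ quantifies over $\U_\M$, the formula $K\lrp{a}\phi$ holds at $\M,s$ iff for every $u\in\U_\M$ we have both (i) that $u$ has some $a$-successor (from $\lr{a}\top$) and (ii) that every $t$ with $u\rel{a}t$ satisfies $\M|^a,t\vDash\phi$ (from $[a]\phi$). Gathering all the $a$-successors of all $u\in\U_\M$, condition (ii) says precisely that $\M|^a,v\vDash\phi$ for every $v\in\U|^a$, by the very definition $\U|^a=\{v\mid\exists u\in\U_\M,\ u\rel{a}v\}$; and condition (i) holding for all $u\in\U_\M$ is exactly $\M,s\vDash K\lr{a}\top$.

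Next I would unfold the right-hand side using the program semantics. By the clause for $;$ and the test clause, $(\M,s)\llbracket ?K\lr{a}\top;a\rrbracket(\M',s')$ holds iff $\M,s\vDash K\lr{a}\top$, $\M'=\M|^a$, and $s\rel{a}s'$. Hence $\lr{?K\lr{a}\top;a}K\phi$ holds at $\M,s$ iff $\M,s\vDash K\lr{a}\top$ and there is some $s'$ with $s\rel{a}s'$ and $\M|^a,s'\vDash K\phi$. The crucial observation here is that $K\phi$ is point-independent inside a fixed uncertainty set: $\M|^a,s'\vDash K\phi$ means $\M|^a,v\vDash\phi$ for all $v\in\U_{\M|^a}=\U|^a$, regardless of which witness $s'$ was chosen (we only need $s'\in\U|^a$, which is automatic since $s\in\U_\M$ and $s\rel{a}s'$).

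It then remains to match the two unfolded conditions, which is where the single genuine subtlety lies. Both sides share the executability guard $\M,s\vDash K\lr{a}\top$. For the goal part, the left side runs a double universal quantifier over $u\in\U_\M$ and its $a$-successors, whereas the right side carries a lone existential witness $s'$ together with a universal $K$ over $\U|^a$. These agree because (a) the double universal on the left enumerates exactly $\U|^a$, and (b) the existential witness $s'$ on the right does not constrain the $K\phi$-claim but merely certifies $\U|^a\neq\emptyset$, and such a witness is available precisely when $K\lr{a}\top$ holds and $s\in\U_\M$. I expect this quantifier-matching to be the main thing to get right: one must verify that the solitary $\lr{a}$-witness on the right is harmless for the $K$-claim and that the guard $K\lr{a}\top$ supplies it, so that ``$\forall u\in\U_\M$ executable, and $\forall v\in\U|^a$ we have $\phi$'' on the left coincides with ``$K\lr{a}\top$, plus an $a$-witness, plus $\forall v\in\U|^a$ we have $\phi$'' on the right.
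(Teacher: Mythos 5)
Your proof is correct, but it takes a genuinely different route from the paper's. You argue purely semantically: fixing a pointed uncertainty map $\M,s$, you unfold both sides into set-theoretic conditions and rely on three observations --- (a) the union of the $a$-successors of all $u\in\U_\M$ is exactly $\U_\M|^a$, so the double universal on the left is the same as a single universal over $\U_\M|^a$; (b) $K\phi$ is point-independent within a fixed uncertainty map, so the choice of witness $s'$ is irrelevant to the $K\phi$-claim; and (c) the lone existential witness demanded by $\lr{?K\lr{a}\top;a}$ is supplied by the guard $K\lr{a}\top$ together with the requirement $s\in\U_\M$ on pointed uncertainty maps. These are exactly the right subtleties, and your quantifier matching goes through. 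The paper instead proceeds schematically: it reduces the claim to the validity $(K[a]\phi\wedge K\lr{a}\phi)\leftrightarrow(K\lr{a}\top\wedge\lr{a}K\phi)$ and derives each direction from the already-established validity of the perfect recall axiom $\ZIG{a}$ ($K[a]p\to[a]Kp$) and the no-miracles axiom $\ZAG{a}$ ($\lr{a}Kp\to K[a]p$), plus a few auxiliary semantic implications. The paper's route is modular and makes visible that the proposition is essentially a consequence of the two interaction axioms of the logic, the same principles that drive the completeness proof; your route is self-contained and more elementary, and it exposes directly why the single $\lr{a}$-witness on the right-hand side is harmless. One cosmetic point: the paper defines $\lrp{a}\phi$ as $[a]\phi\land\lr{a}\phi$ and only notes its equivalence with $[a]\phi\land\lr{a}\top$; you start from the latter form, so strictly speaking your first step silently invokes that (easy) equivalence.
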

\begin{proof}
	Since $\vDash K\lrp{a}\phi\leftrightarrow (K[a]\phi\wedge K\lr{a}\phi)$ and $\vDash (K\lr{a}\top\wedge\lr{a}K\phi)\leftrightarrow \lr{?K\lr{a}\top;a}K\phi$, we only need to show that $\vDash (K[a]\phi\wedge K\lr{a}\phi)\leftrightarrow   (K\lr{a}\top\wedge\lr{a}K\phi)$.
		
	Left to right:\\
	\indent (L1) $\vDash K[a]\phi\to [a]K\phi$, by validity of Axiom \ZIG{a}\\
	\indent (L2) $\vDash K\lr{a}\phi\to \lr{a}\top\wedge K\lr{a}\top$, by semantics\\
	\indent (L3) $\vDash \lr{a}\top\wedge[a]K\phi\to \lr{a}K\phi$, by semantics\\
	\indent (L4) $\vDash K[a]\phi\wedge K\lr{a}\phi\to K\lr{a}\top\wedge\lr{a} K\phi$, by (L1)-(L3)
	
	Right to left:\\
	\indent (R1) $\vDash \lr{a}K\phi\to K[a]\phi$, by validity of Axiom \ZAG{a}\\
	\indent (R2) $\vDash K[a]\phi\wedge K\lr{a}\top\to K\lr{a}\phi$, by semantics\\
	\indent (R3) $\vDash K\lr{a}\top\wedge\lr{a} K\phi\to K[a]\phi\wedge K\lr{a}\phi$, by R(1)-R(2)	
\end{proof}
\begin{lemma}\label{prop:exist_know}
For any $a_1a_2\cdots a_n\in\L(\tA^*)$:
$$\vDash K\lrp{a_1}\lrp{a_2}\cdots\lrp{a_n}\phi \leftrightarrow  \lr{?K\lr{a_1}\top;a_1 ;\dots ; ?K\lr{a_n}\top;a_n}K\phi$$
\end{lemma}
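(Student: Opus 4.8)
The plan is to prove the statement by induction on the length $n$ of the action sequence $a_1\cdots a_n$, with the base case $n=1$ being exactly Proposition~\ref{prop:basicCase}. The guiding principle is that the entire argument should stay at the level of \emph{validities}, rewriting subformulas only through the substitution-of-equivalents principle of Proposition~\ref{prop:substitution}. This discipline is essential: as the paper already stresses, updating the model by an action changes the uncertainty set, so the truth value of an epistemic subformula deeper in the sequence is context-dependent, and any attempt to argue pointwise state-by-state would be unsound. Working only with $\vDash$ and Proposition~\ref{prop:substitution} sidesteps this entirely.

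For the inductive step, I would assume the equivalence for all sequences of length $n$ and consider an arbitrary sequence $a_1\cdots a_{n+1}$. The first move is to read $K\lrp{a_1}\lrp{a_2}\cdots\lrp{a_{n+1}}\phi$ as $K\lrp{a_1}\psi$ with $\psi:=\lrp{a_2}\cdots\lrp{a_{n+1}}\phi$, and to apply Proposition~\ref{prop:basicCase} (with $\psi$ in place of its goal formula) to obtain
$$\vDash K\lrp{a_1}\lrp{a_2}\cdots\lrp{a_{n+1}}\phi \leftrightarrow \lr{?K\lr{a_1}\top;a_1}K\lrp{a_2}\cdots\lrp{a_{n+1}}\phi.$$
Here the base case has absorbed all the genuinely modal content, namely the perfect-recall and no-miracles axioms $\ZIG{a}$ and $\ZAG{a}$, so no further axiomatic work is needed in the step itself.

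Next, the subformula $K\lrp{a_2}\cdots\lrp{a_{n+1}}\phi$ is precisely the left-hand side of the induction hypothesis instantiated at the length-$n$ tail $a_2\cdots a_{n+1}$, hence valid-equivalent to $\lr{?K\lr{a_2}\top;a_2;\dots;?K\lr{a_{n+1}}\top;a_{n+1}}K\phi$. I would then invoke Proposition~\ref{prop:substitution}(1) to replace this occurrence inside the diamond context $\lr{?K\lr{a_1}\top;a_1}(\cdot)$, and finally use the valid schema $\lr{\pi;\pi'}\chi\leftrightarrow\lr{\pi}\lr{\pi'}\chi$ listed among the valid formulas to merge the two nested diamonds into a single sequential program, yielding the right-hand side
$$\vDash K\lrp{a_1}\cdots\lrp{a_{n+1}}\phi \leftrightarrow \lr{?K\lr{a_1}\top;a_1;\dots;?K\lr{a_{n+1}}\top;a_{n+1}}K\phi,$$
which closes the induction.

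I expect the only delicate point to be the bookkeeping that licenses the rewriting. The tempting shortcut is a local, context-sensitive replacement of the tail formula, which is exactly what the context-dependence of the semantics forbids. The care needed is therefore to verify that each rewrite is a legitimate application of the \emph{global} substitution principle of Proposition~\ref{prop:substitution} to a genuine subformula occurrence; once this is respected, the step is mechanical and the induction goes through without additional semantic analysis.
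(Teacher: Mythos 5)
Your proof is correct, and it rests on exactly the same ingredients as the paper's: induction on the length of the sequence, Proposition~\ref{prop:basicCase} as the sole carrier of the modal content, Proposition~\ref{prop:substitution} to rewrite under a diamond, and the validity $\lr{\pi;\pi'}\chi\leftrightarrow\lr{\pi}\lr{\pi'}\chi$ to merge programs. The one real difference is the direction of the peeling: you work from the front, applying Proposition~\ref{prop:basicCase} at the top level to $a_1$ with the complex goal $\lrp{a_2}\cdots\lrp{a_{n+1}}\phi$, and then invoking the induction hypothesis on the tail $a_2\cdots a_{n+1}$ (same goal $\phi$) inside the context $\lr{?K\lr{a_1}\top;a_1}(\cdot)$ via substitution. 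The paper works from the back: it applies the induction hypothesis at the top level to the prefix $a_1\cdots a_k$ but with the \emph{modified} goal $\lrp{a_{k+1}}\phi$, and then rewrites $K\lrp{a_{k+1}}\phi$ into $\lr{?K\lr{a_{k+1}}\top;a_{k+1}}K\phi$ inside the long diamond, again via Proposition~\ref{prop:basicCase} and substitution. The two are mirror images and equally valid; the only trade-off is what the induction hypothesis must be schematic over. Your step instantiates the IH at a different sequence with the goal formula held fixed, whereas the paper's step instantiates it at a different goal formula, so the paper's induction needs the statement quantified over $\phi$ (which it is, since $\phi$ is schematic in the lemma); your version is marginally tighter in this respect, and also keeps the substitution confined to a single-action context. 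Your closing caution is also exactly the right one: both proofs are sound only because Proposition~\ref{prop:substitution} is a congruence for \emph{validity}, not a pointwise replacement, which is what the context-dependence of the semantics would otherwise break.
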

\begin{proof}
It is trivial when $n=0$ (i.e., the sequence is $\epsilon$), since the claim then boils down to $K\phi\lra K\phi$.
We prove the non-trivial cases by induction on $n\geq 1$. When $n=1$, it follows from Proposition~\ref{prop:basicCase}.
Now, as the induction hypothesis, we assume that:
$$\vDash K\lrp{a_1}\lrp{a_2}\cdots\lrp{a_k}\phi \leftrightarrow  \lr{?K\lr{a_1}\top;a_1 ;\dots ; ?K\lr{a_k}\top;a_k}K\phi.$$
We need to show:
\begin{align*}
\vDash & K\lrp{a_1}\lrp{a_2}\cdots\lrp{a_{k+1}}\phi \leftrightarrow \lr{?K\lr{a_1}\top;a_1 ;\dots ; ?K\lr{a_{k+1}}\top;a_{k+1}}K\phi.
\end{align*}
By IH,
\begin{align*}
\vDash & K\lrp{a_1}\lrp{a_2}\cdots\lrp{a_{k+1}}\phi \leftrightarrow \lr{?K\lr{a_1}\top;a_1 ;\dots ; ?K\lr{a_k}\top;a_k}K\lrp{a_{k+1}}\phi. \qquad (1)
\end{align*}
Due to Propositions \ref{prop:substitution} and \ref{prop:basicCase}, we have: \begin{align*}
\vDash &\lr{?K\lr{a_1}\top;a_1 ;\dots ; ?K\lr{a_k}\top;a_k}K\lrp{a_{k+1}}\phi \leftrightarrow \lr{?K\lr{a_1}\top;a_1 ;\dots ; ?K\lr{a_n}\top;a_k}\lr{?K\lr{a_{k+1}}\top;a_{k+1}}K\phi. \ (2)
\end{align*} The conclusion is immediate by combining (1) and (2).
\end{proof}

The following theorem follows from the above lemma.
\begin{theorem}\label{theo:conformant_planning}
	Given a pointed uncertainty map $\M,s$, an \EPDL\ formula $\phi$ and a set $\B\subseteq\tA$, the following two are equivalent:
\begin{itemize}
\item[(1)] There is a $\sigma=a_1\dots a_n\in\L(\B^*)$ such that $\calM,s\vDash K \lrp{a_1}\lrp{a_2}\cdots\lrp{a_n}\phi$;
\item[(2)] $\calM,s\vDash \lr{(\Sigma_{a\in\B}(?K\lr{a}\top;a))^*}K\phi$.
\end{itemize}
\end{theorem}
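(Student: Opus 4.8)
The plan is to read the single iterated formula in (2) as an existential statement over finite action words and then match it term-by-term to (1) via Lemma~\ref{prop:exist_know}. Write $\pi_\B$ for the program $\Sigma_{a\in\B}(?K\lr{a}\top;a)$, so that the formula in (2) is $\lr{\pi_\B^*}K\phi$. The key structural fact I would establish first is the unfolding of the reflexive-transitive closure: since $\llbracket\pi_\B^*\rrbracket=\llbracket\pi_\B\rrbracket^\star=\bigcup_{n\geq 0}\llbracket\pi_\B^n\rrbracket$, and since $\llbracket\pi_\B\rrbracket=\bigcup_{a\in\B}\llbracket ?K\lr{a}\top;a\rrbracket$ by the clause for $+$, distributing the union through the composition in each power gives $\llbracket\pi_\B^n\rrbracket=\bigcup_{a_1,\dots,a_n\in\B}\llbracket ?K\lr{a_1}\top;a_1;\dots;?K\lr{a_n}\top;a_n\rrbracket$.

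From this unfolding together with the semantic clause for $\lr{\cdot}$, I would read off that $\calM,s\vDash\lr{\pi_\B^*}K\phi$ holds if and only if there exist $n\geq 0$ and $a_1,\dots,a_n\in\B$ with $\calM,s\vDash\lr{?K\lr{a_1}\top;a_1;\dots;?K\lr{a_n}\top;a_n}K\phi$. This is the bridge between the single formula of (2) and a family of formulas indexed by the finite words over $\B$, which are exactly the elements of $\L(\B^*)$: since $\B^*=(\Sigma_{a\in\B}a)^*$ carries no tests, its computation sequences are precisely the finite (possibly empty) sequences of action symbols drawn from $\B$.

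Finally, I would invoke Lemma~\ref{prop:exist_know}: for each fixed sequence $a_1\dots a_n$, the stated validity $\vDash K\lrp{a_1}\cdots\lrp{a_n}\phi\leftrightarrow\lr{?K\lr{a_1}\top;a_1;\dots;?K\lr{a_n}\top;a_n}K\phi$ specializes to equality of truth values at the particular pointed model $\calM,s$. Substituting this equivalence into the characterisation above rewrites ``there are $n$ and $a_1,\dots,a_n\in\B$ with $\calM,s\vDash\lr{\dots}K\phi$'' as ``there is $\sigma=a_1\dots a_n\in\L(\B^*)$ with $\calM,s\vDash K\lrp{a_1}\cdots\lrp{a_n}\phi$'', which is exactly (1). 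Chaining the two biconditionals yields (1) $\Leftrightarrow$ (2).

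The routine content lies entirely in the unfolding of the Kleene star; the only place demanding care is the bookkeeping when distributing the union over $\B$ through the composition in each $\pi_\B^n$, and checking that the $n=0$ case (the empty plan, where both sides collapse to $\calM,s\vDash K\phi$) is supplied by the reflexive part of the closure. I do not expect a genuine obstacle, since Lemma~\ref{prop:exist_know} already carries the semantic weight: the theorem is in effect its existential closure over $\L(\B^*)$.
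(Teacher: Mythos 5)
Your proposal is correct and takes essentially the same route as the paper: the paper derives the theorem in one step from Lemma~\ref{prop:exist_know}, leaving implicit exactly the Kleene-star unfolding and the identification of $\L(\B^*)$ with finite words over $\B$ that you spell out. Your explicit treatment of the distribution of union over composition and of the $n=0$ case simply fills in bookkeeping the paper takes for granted.
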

%
\medskip

We would like to emphasise that the $K$ operator right before $\phi$ in the definition of $\theta_{\B,\phi}$ cannot be omitted, as demonstrated by the following example:

\begin{example}
Given uncertainty map $\M$ depicted as follows, let the goal formula be $p$. As we can see, there is no solution to this conformant planning problem. Indeed $\calM,s_1\nvDash \lr{(\Sigma_{a\in\B}(?K\lr{a}\top;a))^*}K p$ with $\B=\{a,b\}$, but we could have $\calM,s_1\vDash \lr{(\Sigma_{a\in\B}(?K\lr{a}\top;a))^*}p$.
$$
\xymatrix{
{s_1}\ar[r]|a&{s_2}\ar[r]|b\ar[dr]|b&{s_5: p}\\
&&{s_4}
\save "1,1"!C="g1"*+[F--:<+20pt>]\frm{}
\restore
}
$$
\end{example}

We close this section with an example about planning with both positive and negative epistemic goals (the agent should know something, but not too much).
\begin{example}
Given uncertainty map $\M$ depicted as follows, let the goal be $Kp$ then both $a$ and $b$ are conformant plans. If the goal is $Kp\wedge \neg K q$, only $a$ is a good plan.
$$
\xymatrix@R-10pt{
{s_1}\ar[r]|a\ar[dr]|b&{s_3:p}\\
{s_2}\ar[r]^a\ar[dr]|b&{s_4:p,q}\\
&{s_5: p,q}
\save "1,1"."2,1"!C="g1"*+[F--:<+20pt>]\frm{}
\restore
}
$$
\end{example}

\section{Model checking EPDL: complexity and algorithms}
\label{sec.mc}
%
%
%

In this section, we first focus on the model checking problem of the following star-free fragment of \EPDL\ (call it \EPDLsf):
\begin{center}
		$\phi::=\top\mid p\mid \neg\phi\mid(\phi\wedge\phi)\mid [\pi]\phi \mid K\phi$\\
		$\pi::=a\mid {?\phi}\mid  (\pi;\pi)\mid (\pi+\pi) $\end{center}
We will show that model checking \EPDLsf\ is \PSPACE-complete. In particular, the upper bound is shown by making use of an alternative context-dependent semantics. Then we give an \EXPTIME\ algorithm for the model checking problem of the full \EPDL\ inspired by another alternative semantics based on 2-dimensional models. Finally we give a \PSPACE\ algorithm for the conformant planning problem in \EPDL.  Note that throughout this section, we focus on uncertainty maps with finitely many states and assume $\R_a=\emptyset$ for \textit{co-finitely} many $a\in \tA$.

\subsection{Complexity of model checking EPDL$^-$}
\subsubsection{Lower Bound}

To show the \PSPACE\ lower bound, we provide a polynomial reduction of QBF (\textit{quantified Boolean formula}) truth testing to the model checking problem of $\EPDLsf$. Note that to determine whether a given QBF (even in prenex normal form based on a conjunctive normal form) is true or not is known to be \PSPACE-complete~\cite{stockmeyer1973}. Our method is inspired by \cite{phs02} which discusses the complexity of model checking temporal logics with past operators. Surprisingly, we can use the uncertainty sets to encode the `past' and use the dual of the knowledge operator to `go back' to the past. This intuitive idea will become more clear in the proof.

\bigskip

QBF formulas are $Q_1 x_1 Q_2x_2\dots Q_nx_n\phi(x_1,\dots,x_n)$ where:
\begin{itemize}
\item For $1\leq n\leq n, Q_i$ is $\exists$ if $i$ is odd, and $Q_i$ is $\forall$ if $i$ is even.
\item $\phi$ is a propositional formula in CNF based on variables $x_1,\dots,x_n$,
\end{itemize}

For each such QBF $\alpha$ with $n$ variables, we need to find a pointed model $\M_n,x_0$ and a formula $\theta_\alpha$ such that $\alpha$ is true iff $\M_n,x_0\vDash\theta_\alpha$. The model $\M_n$ is defined below.

\begin{definition}
Let $\tA=\{a_i,\bar{a}_i\mid i\geq 1  \}$ and $\tP=\{p_k,q_k\mid k\geq 1 \}$, the uncertainty map $\M_n=\langle \calS, \{\calR_{a}\mid a\in \tA\}, \calV, \calU \rangle$ is defined as:
\begin{itemize}
\item $\S=\{x_0\}\cup\{x_i\mid 1\leq i \leq n\}\cup \{\bar{x}_i\mid 1\leq i \leq n\}$
\item $\V(x_0)=\emptyset$, and $\V(x_i)=\{p_i\}, \V(\bar{x}_i)=\{q_i\}$ for $1\leq i\leq n$.
\item $\rel{a_i}=\{(s,s)\mid s\in\S \}\cup\{(x_{i-1},x_i),(\bar{x}_{i-1},x_i) \}$
\item $\rel{\bar{a}_i}=\{(s,s)\mid s\in\S \}\cup\{(x_{i-1},\bar{x}_i),(\bar{x}_{i-1},\bar{x}_i) \}$
\item $\U=\{x_0\}$
\end{itemize}
\end{definition}
 $|\M_n|$ is linear in $n$ and can be depicted as the following:

$$
\xymatrix@C-=0.45cm@R-10pt{
                               &x_1:p_1\ar@(lu,ru)[]|\tA\ar[r]^{a_2}\ar[ddr]|(.33){\bar{a}_2}&  x_2:p_2\ar@(lu,ru)[]|\tA\ar[r]^{a_3}\ar[ddr]|(.33){\bar{a}_3}&\cdots& x_{n-1}:p_{n-1}\ar@(lu,ru)[]|\tA\ar[r]^(.60){a_n}\ar[ddr]|(.33){\bar{a}_n}& x_n:p_n\ar@(lu,ru)[]|\tA\\
x_0\ar@(lu,ru)[]|\tA\ar[ur]|{a_1}\ar[dr]|{\bar{a}_1}&   \\
                               &\bar{x}_1:q_1\ar@(ld,rd)[]|\tA\ar[r]_{\bar{a}_2}\ar[uur]|(.33){a_2} &\bar{x}_2:q_2\ar@(ld,rd)[]|\tA\ar[r]_{\bar{a}_3}\ar[uur]|(.33){a_3} &\cdots &\bar{x}_{n-1}:p_{n-1}\ar@(ld,rd)[]|\tA\ar[r]_(.60){\bar{a}_n}\ar[uur]|(.33){a_n}&\bar{x}_n:q_n\ar@(ld,rd)[]|\tA
\save "2,1"!C="g1"*+[F--:<+20pt>]\frm{}
\restore
}
$$

Given $\alpha=Q_1 x_1 Q_2x_2\dots Q_nx_n\phi(x_1,\dots,x_n)$, the formula $\theta_\alpha$ is defined as $$\qt_1\cdots\qt_n\psi(\hat{K}p_1,\cdots,\hat{K}p_n,\hat{K}q_1,\cdots, \hat{K}q_n)$$ where $\qt_i$ is $\lr{(a_i+\bar{a}_i);?(p_i\vee q_i)}$ if $i$ is odd and $\qt_i$ is $[(a_i+\bar{a}_i);?(p_i\vee q_i)]$ if $i$ is even, and $\psi$ is obtained from $\phi(x_1,\dots,x_n)$ by replacing each $x_i$ with $\hat{K}p_i$ and $\neg x_i$ with $\hat{K}q_i$.

To ease the latter proof, we first define the valuation tree below.

\begin{definition}[V-tree]
A V-tree $\tau$ is a rooted tree such that 1) each node is $0$ or $1$ (except the root $\epsilon$); 2) each internal node in an even level has only one successor; 3) each internal node in an odd level has two successors:  one is $0$ and the other one is $1$; 4) each edge to node $0$ of level $i$ is labelled $\bar{a}_i$; 5) each edge to node $1$ of level $i$ is labelled $a_i$. Given a V-tree with depth $n$, a path $\sigma$ is a sequence of $A_1\dots A_n$ where $A_i=a_i$ or $A_i=\bar{a}_i$. A path $\sigma$ can also be seen as a valuation assignment for $x_1,\dots,x_n$ with the convention that $\sigma(x_i)=1$ if $a_i$ occurs in $\sigma$ and $\sigma(x_i)=0$ if $\bar{a}_i$ occurs in $\sigma$. Let $path(\tau)$ be the set of all paths of $\tau$.
\end{definition}
As an example, a V-tree $\tau$ can be depicted as below:
$$
\xymatrix@u@C-25pt{
&\epsilon\ar[d]|{a_1}&\\
&1\ar[dl]|{\bar{a}_2}\ar[dr]|{a_2}&\\
0\ar[d]|{a_3}&&1\ar[d]|{\bar{a}_3}\\
1&&0
}
$$

It is not hard to see the following:
\begin{proposition}\label{prop:qbooleanf}
For each $1\leq i\leq n$, we have: 
$\alpha=Q_1x_1\dots Q_ix_i Q_{i+1}x_{i+1}\dots Q_nx_n\phi$ is true  iff 
there exists a V-tree $\tau$ with depth $i$ such that  for each $\sigma\in path(\tau)$ $\sigma(Q_{i+1}x_{i+1}\dots Q_nx_n\phi)=1$ ($\sigma$ as a valuation).
\end{proposition}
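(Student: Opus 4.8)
The plan is to prove the equivalence by induction on $i$, exploiting the fact that the branching discipline of a V-tree mirrors exactly the quantifier alternation of the QBF. It is convenient to abbreviate by $T(i)$ the right-hand side of the claimed equivalence, i.e.\ ``there is a V-tree $\tau$ of depth $i$ such that $\sigma(Q_{i+1}x_{i+1}\dots Q_nx_n\phi)=1$ for every $\sigma\in path(\tau)$''. I will establish the chain $[\alpha\text{ is true}]\iff T(0)\iff T(1)\iff\cdots\iff T(n)$, where $T(0)$ refers to the unique depth-$0$ tree (the root $\epsilon$ alone), whose only path is the empty path $\sigma_\epsilon$, so that $T(0)$ reads literally as $\sigma_\epsilon(Q_1x_1\dots Q_nx_n\phi)=1$, that is, ``$\alpha$ is true''. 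The anchor $[\alpha\text{ is true}]\iff T(0)$ is thus immediate, and restricting the chain to $1\le i\le n$ yields the proposition; so everything reduces to the inductive step $T(i)\iff T(i+1)$ for $0\le i<n$.

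For the inductive step I would first line up the tree structure with the quantifiers. A node at level $i$ has one successor when $i$ is even and two successors (labelled $a_{i+1}$ and $\bar a_{i+1}$) when $i$ is odd; meanwhile $Q_{i+1}=\exists$ precisely when $i+1$ is odd, i.e.\ when $i$ is even, and $Q_{i+1}=\forall$ precisely when $i$ is odd. Hence ``$\exists x_{i+1}$'' always corresponds to the one-successor case and ``$\forall x_{i+1}$'' to the two-successor case. For the forward direction $T(i)\Rightarrow T(i+1)$, take a witnessing depth-$i$ tree $\tau$; each path $\sigma$ is a leaf satisfying $\sigma(Q_{i+1}x_{i+1}\dots\phi)=1$. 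If $Q_{i+1}=\exists$, the semantics of $\exists$ gives a value $v\in\{0,1\}$ with $\sigma[x_{i+1}\mapsto v](Q_{i+2}x_{i+2}\dots\phi)=1$, and I attach to $\sigma$ the single successor determined by $v$ (node $1$ via $a_{i+1}$ if $v=1$, node $0$ via $\bar a_{i+1}$ otherwise), respecting the one-successor rule since $i$ is even. If $Q_{i+1}=\forall$, both $\sigma[x_{i+1}\mapsto 0]$ and $\sigma[x_{i+1}\mapsto 1]$ satisfy $Q_{i+2}x_{i+2}\dots\phi$, so I attach both successors, respecting the two-successor rule since $i$ is odd. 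The resulting depth-$(i+1)$ V-tree witnesses $T(i+1)$. For the backward direction $T(i+1)\Rightarrow T(i)$, take a witnessing depth-$(i+1)$ tree and delete its last level (the result is again a V-tree); for each surviving path $\sigma$ of depth $i$, its one or two extensions in the larger tree, all satisfying $\sigma'(Q_{i+2}x_{i+2}\dots\phi)=1$, testify respectively that some value (if $Q_{i+1}=\exists$) or both values (if $Q_{i+1}=\forall$) of $x_{i+1}$ make $Q_{i+2}x_{i+2}\dots\phi$ true, i.e.\ that $\sigma(Q_{i+1}x_{i+1}\dots\phi)=1$.

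I expect the only delicate point to be the parity bookkeeping: one must keep straight that a level-$i$ node branches according to the parity of $i$ while the quantifier it governs is $Q_{i+1}$, and then verify that the QBF ``choose one value'' semantics for $\exists$ and ``take both values'' semantics for $\forall$ match the one-successor and two-successor disciplines respectively. Once this alignment is fixed, both directions of the inductive step are mere unfoldings of the recursive truth definition of QBF along the tree, and no further difficulty arises.
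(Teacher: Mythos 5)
Your proof is correct. Note that the paper itself offers no proof of this proposition at all --- it is introduced with ``It is not hard to see the following'' --- so there is nothing to compare against; your induction on $i$, anchored at the trivial depth-$0$ tree and using the parity alignment (a level-$i$ node branches once when $i$ is even, matching $Q_{i+1}=\exists$, and twice when $i$ is odd, matching $Q_{i+1}=\forall$), is precisely the routine argument the authors are gesturing at, and you correctly identified and resolved the only delicate point, namely that the branching discipline at level $i$ governs the quantifier $Q_{i+1}$ rather than $Q_i$.
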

Now let us see the update result of running a path $\sigma\in path(\tau)$ on $\M_n$. Due to  the lack of space, we omit the proofs of the following two propositions.
\begin{proposition}\label{prop:vuncertaintyset}
Given $\M_n$, let $\sigma=A_1\dots A_i$ $(1\leq i\leq n)$ be a sequence of actions such that $A_k=a_k$ or $A_k=\bar{a}_k$ for each $1\leq k\leq i$, then we have $\U|^\sigma=\{x_0,X_1,\dots,X_i \}$ where $X_k=x_k$ if $A_k=a_k$ else $X_k=\bar{x}_k$ for each $1\leq k\leq i$.
\end{proposition}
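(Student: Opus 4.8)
The plan is to prove the statement by induction on the length $i$ of the action sequence $\sigma = A_1\cdots A_i$. The structural feature of $\M_n$ that drives the whole argument is that every action $a_i$ and $\bar{x}$-action $\bar{a}_i$ carries a self-loop $(s,s)$ at \emph{every} state $s\in\S$. Consequently, applying an action never removes a state that is already present in the updated uncertainty set, and the only non-reflexive edges labelled $a_i$ (resp.\ $\bar{a}_i$) are the two edges into $x_i$ (resp.\ $\bar{x}_i$) originating from the level-$(i-1)$ states $x_{i-1}$ and $\bar{x}_{i-1}$.

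For the base case $i=1$, I would compute $\U|^{A_1}$ directly from $\U=\{x_0\}$ using $\U|^a=\{r'\mid \exists r\in\U,\ r\rel{a}r'\}$. If $A_1=a_1$, the $a_1$-successors of $x_0$ are $x_0$ itself (self-loop) and $x_1$ (the edge $(x_0,x_1)$), so $\U|^{a_1}=\{x_0,x_1\}$, matching the claim with $X_1=x_1$; the case $A_1=\bar{a}_1$ is symmetric. In passing one notes that the term $(\bar{x}_{i-1},x_i)$ of the relation reads $(\bar{x}_0,x_1)$ when $i=1$, which refers to a state $\bar{x}_0\notin\S$ and is therefore vacuous.

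For the inductive step I would assume $\U|^{A_1\cdots A_{i-1}}=\{x_0,X_1,\dots,X_{i-1}\}$ and apply $A_i$, treating the case $A_i=a_i$ (the case $A_i=\bar{a}_i$ being symmetric). Every state of the current set contributes itself to $(\U|^{A_1\cdots A_{i-1}})|^{a_i}$ via its self-loop, giving $\{x_0,X_1,\dots,X_{i-1}\}\subseteq \U|^{A_1\cdots A_i}$. For the potential new states, the only non-reflexive $a_i$-edges end in $x_i$ and start from $x_{i-1}$ or $\bar{x}_{i-1}$; since $X_{i-1}\in\{x_{i-1},\bar{x}_{i-1}\}$ lies in the current set, $x_i$ is added, and since for $i\geq 2$ none of $x_0,X_1,\dots,X_{i-2}$ equals $x_{i-1}$ or $\bar{x}_{i-1}$ (their indices are strictly smaller, and $x_0$ has index $0$), nothing else is produced. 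Hence $\U|^{A_1\cdots A_i}=\{x_0,X_1,\dots,X_{i-1},x_i\}$, which with $X_i=x_i$ is exactly the claimed set.

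I do not expect a genuine obstacle here: the argument is routine. The only points requiring minor care are confirming that states at levels strictly below $i-1$ (together with $x_0$) spawn no new elements under $A_i$ — which holds precisely because all relevant non-reflexive edges originate solely from level-$(i-1)$ states — and dealing with the vacuous $\bar{x}_0$ reference in the base case. Both the $\subseteq$ and $\supseteq$ directions fall out of the single observation that $A_i$-successors of a state are itself plus, for the level-$(i-1)$ states, the new level-$i$ state.
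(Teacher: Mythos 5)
Your proof is correct: the induction on the length of $\sigma$, driven by the two observations that every action has a self-loop at every state (so nothing is ever removed from the uncertainty set) and that the only non-reflexive $A_i$-edges lead from the level-$(i-1)$ states to the single level-$i$ state $X_i$, establishes exactly the claimed set $\{x_0,X_1,\dots,X_i\}$. The paper itself omits the proof of this proposition (citing lack of space), and your argument is precisely the routine one the authors evidently had in mind, including the correct handling of the vacuous $\bar{x}_0$ edge in the base case.
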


Given $\sigma=A_1\dots A_n$ where $A_i$ is $a_i$ or $\bar{a}_i$ for each $1\leq i\leq n$, let $g(\sigma)=x_n$ if $A_n=a_n$ and $g(\sigma)=\bar{x}_n$ if $A_n=\bar{a}_n$. By Proposition \ref{prop:vuncertaintyset}, we always have $g(\sigma)\in \U_{\M_k}|^\sigma$ with $k>n$. Thus given $\M_k$ and $\sigma=A_1\dots A_n$ and $k>n$, $\M_k|^\sigma,g(\sigma)$ is a pointed uncertainty map.

\begin{proposition}\label{prop:vumodel}
For each $1\leq i\leq n$, we have $\M_k,x_0\vDash \qt_1\dots\qt_i\qt_{i+1}\dots\qt_n\psi$ iff there exists a V-tree $\tau$ with depth $i$ such that $\M_k|^\sigma,g(\sigma)\vDash \qt_{i+1}\dots\qt_n\psi$ for each $\sigma\in path(\tau)$, where $k>n$ and $g(\sigma)$ is the state corresponds to the last edge of $\sigma$, e.g., $g(a_1\bar{a}_2)=\bar{x}_2$.
\end{proposition}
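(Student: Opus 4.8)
The plan is to prove the biconditional by induction on $i$, reading the $i$ leading modalities $\qt_1\cdots\qt_i$ as instructions for growing the first $i$ levels of a V-tree. I would write $\Phi_j$ for the tail formula $\qt_{j+1}\cdots\qt_n\psi$, so that $\Phi_0=\qt_1\cdots\qt_n\psi$ is exactly the left-hand side and $\qt_{j+1}\Phi_{j+1}=\Phi_j$. Since the left-hand side $\M_k,x_0\vDash\Phi_0$ does not depend on $i$, it suffices to establish, for every $0\leq i<n$, that the existence of a depth-$i$ V-tree $\tau$ with $\M_k|^\sigma,g(\sigma)\vDash\Phi_i$ for all $\sigma\in path(\tau)$ is equivalent to the existence of a depth-$(i+1)$ V-tree with the analogous property for $\Phi_{i+1}$; chaining these equivalences from the base case up to any $i$ then gives the claim. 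The base case $i=0$ is immediate, because the only depth-$0$ V-tree is its root, whose unique path is $\epsilon$ with $g(\epsilon)=x_0$ and $\M_k|^\epsilon=\M_k$, so its defining condition is literally $\M_k,x_0\vDash\Phi_0$.

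The engine of the induction is a local computation of $\qt_{i+1}$ at a single leaf $\sigma=A_1\cdots A_i$. By Proposition~\ref{prop:vuncertaintyset} the point $g(\sigma)$ lies in $\U|^\sigma$, so $\M_k|^\sigma,g(\sigma)$ is a genuine pointed uncertainty map. From $g(\sigma)$ the program $a_{i+1}+\bar a_{i+1}$ offers only the self-loop back to $g(\sigma)$ together with the real edge to $x_{i+1}$ (under $a_{i+1}$) or to $\bar x_{i+1}$ (under $\bar a_{i+1}$); because $p_{i+1}\vee q_{i+1}$ is propositional and hence unaffected by the uncertainty-set update, the guard $?(p_{i+1}\vee q_{i+1})$ kills the self-loop, whose endpoint $g(\sigma)$ carries no variable of index $i+1$, and keeps exactly the two real successors $x_{i+1}$ and $\bar x_{i+1}$. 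Hence $\M_k|^\sigma,g(\sigma)\vDash\qt_{i+1}\Phi_{i+1}$ reduces, when $i+1$ is odd and $\qt_{i+1}$ is a diamond, to the disjunction of $\M_k|^{\sigma a_{i+1}},x_{i+1}\vDash\Phi_{i+1}$ and $\M_k|^{\sigma\bar a_{i+1}},\bar x_{i+1}\vDash\Phi_{i+1}$, and when $i+1$ is even and $\qt_{i+1}$ is a box, to the corresponding conjunction.

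With this reduction the step matches the semantics to the V-tree rules by parity. If $i+1$ is odd then the leaf $\sigma$ sits at the even level $i$, where a node must have a single successor: the diamond lets me choose, at each leaf, one extension $A_{i+1}$ with $\M_k|^{\sigma A_{i+1}},g(\sigma A_{i+1})\vDash\Phi_{i+1}$, and attaching these single children upgrades a good depth-$i$ tree for $\Phi_i$ into a good depth-$(i+1)$ tree for $\Phi_{i+1}$. If $i+1$ is even then $\sigma$ sits at the odd level $i$, where a node must carry both successors: the box forces both $a_{i+1}$ and $\bar a_{i+1}$ to work, which is precisely what attaching both children demands. For the converse directions I would truncate a depth-$(i+1)$ tree to depth $i$ and read the same equivalence backwards, so that each depth-$i$ leaf inherits $\qt_{i+1}\Phi_{i+1}=\Phi_i$ from its child (odd case, one witness) or its two children (even case, both needed).

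The hard part will be the local reduction in the second paragraph: one must check that the guarded program $(a_{i+1}+\bar a_{i+1});?(p_{i+1}\vee q_{i+1})$, fired from a reachable point $g(\sigma)$, behaves exactly like ``step to the two real children,'' i.e.\ that the omnipresent self-loops are the only spurious transitions and that they are uniformly eliminated by the propositional test. This is exactly where Proposition~\ref{prop:vuncertaintyset} (which pins down $\U|^\sigma$ and the valuation-carrying state $g(\sigma)$) and the update-invariance of propositional tests are indispensable; the remaining matching of diamonds and boxes to the two V-tree branching patterns is then routine bookkeeping on the parity of $i$.
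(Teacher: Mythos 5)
Your proof is correct. The paper itself omits the proof of this proposition (citing lack of space), so there is no official argument to compare against; but your proof --- induction on the depth $i$, with the key local step showing that from a reachable point $g(\sigma)$ the guarded program $(a_{i+1}+\bar a_{i+1});?(p_{i+1}\vee q_{i+1})$ has exactly the two successors $(\M_k|^{\sigma a_{i+1}},x_{i+1})$ and $(\M_k|^{\sigma \bar a_{i+1}},\bar x_{i+1})$, so that the diamond/box $\qt_{i+1}$ matches the one-child/two-children branching rules of V-trees --- is precisely the argument that the surrounding machinery (in particular Proposition~\ref{prop:vuncertaintyset}, which you invoke exactly where it is needed for well-definedness and for eliminating the self-loops) is set up to support.
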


\begin{theorem}The following two are equivalent:
\begin{itemize}
\item $\alpha= Q_1 x_1 Q_2x_2\dots Q_nx_n\phi(x_1,\dots,x_n)$ is true
\item $\M_n,x_0\vDash \qt_1\cdots\qt_n\psi(\hat{K}p_1\cdots\hat{K}p_n,\hat{K}q_1\cdots\hat{K}q_n)$ in which $\psi$ is obtained from $\phi$ by replacing each $x_i$ with $\hat{K}p_i$ and $\neg x_i$ with $\hat{K}q_i$.
\end{itemize}
\end{theorem}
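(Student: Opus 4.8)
The plan is to read off the theorem as the top-level ($i=n$) instance of the two auxiliary propositions, glued together by a single literal-tracking claim. First I would apply Proposition~\ref{prop:qbooleanf} with $i=n$: since the trailing quantifier block $Q_{n+1}x_{n+1}\dots Q_nx_n$ is then empty, it states that $\alpha$ is true iff there is a V-tree $\tau$ of depth $n$ with $\sigma(\phi)=1$ for every $\sigma\in path(\tau)$. In parallel I would apply Proposition~\ref{prop:vumodel} with $i=n$, taking the model to be $\M_n$ itself: this is legitimate because the inner formula has now collapsed to $\psi$, so no action beyond level $n$ is ever executed and only the states of levels $0,\dots,n$ are used. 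It yields that $\M_n,x_0\vDash\qt_1\cdots\qt_n\psi$ iff there is a depth-$n$ V-tree $\tau$ with $\M_n|^\sigma,g(\sigma)\vDash\psi$ for every $\sigma\in path(\tau)$.

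The heart of the argument is then the bridge claim that for every full path $\sigma=A_1\dots A_n$ one has $\M_n|^\sigma,g(\sigma)\vDash\psi$ iff $\sigma(\phi)=1$. To prove it I would invoke Proposition~\ref{prop:vuncertaintyset} to compute $\U|^\sigma=\{x_0,X_1,\dots,X_n\}$, where $X_k=x_k$ when $A_k=a_k$ and $X_k=\bar{x}_k$ when $A_k=\bar{a}_k$. The key observation is that the dual knowledge operator reads this uncertainty set as a record of the past choices along $\sigma$: since $p_i$ holds only at $x_i$ and $q_i$ only at $\bar{x}_i$, we get $\M_n|^\sigma,g(\sigma)\vDash\hat{K}p_i$ iff $x_i\in\U|^\sigma$ iff $A_i=a_i$ iff $\sigma(x_i)=1$, and symmetrically $\M_n|^\sigma,g(\sigma)\vDash\hat{K}q_i$ iff $\sigma(x_i)=0$. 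Thus $\hat{K}p_i$ tracks the literal $x_i$ and $\hat{K}q_i$ tracks $\neg x_i$ exactly. Because $\psi$ is obtained from $\phi$ by the substitutions $x_i\mapsto\hat{K}p_i$ and $\neg x_i\mapsto\hat{K}q_i$, a routine induction on the $(\wedge,\vee)$-structure of the CNF formula $\phi$ then delivers the bridge claim.

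Combining the three ingredients finishes the proof: $\alpha$ is true iff some depth-$n$ V-tree satisfies $\sigma(\phi)=1$ everywhere (Proposition~\ref{prop:qbooleanf}), iff the same tree satisfies $\M_n|^\sigma,g(\sigma)\vDash\psi$ everywhere (bridge claim), iff $\M_n,x_0\vDash\qt_1\cdots\qt_n\psi$ (Proposition~\ref{prop:vumodel}). I expect the main obstacle to be the bridge claim, and specifically the need to keep negations confined to literals. Since $\hat{K}$ is \emph{not} self-dual, one cannot push negations through it, so it is essential that $\phi$ is in CNF and that the substitution encodes $x_i$ and $\neg x_i$ by two different atoms ($p_i$ and $q_i$) living at two different states; this is exactly what makes the induction go through despite the presence of negation in $\phi$. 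The one further point to check is the use of $\M_n$ in place of the $\M_k$ with $k>n$ that appears in Proposition~\ref{prop:vumodel}, which is harmless here precisely because at $i=n$ no action is executed inside $\psi$.
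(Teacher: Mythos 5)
Your proposal is correct and follows essentially the same route as the paper: reduce via Propositions~\ref{prop:qbooleanf} and~\ref{prop:vumodel} (at $i=n$) to the bridge claim that $\sigma(\phi)=1$ iff $\M_n|^\sigma,g(\sigma)\vDash\psi$ on every full path, then verify it at the literal level using Proposition~\ref{prop:vuncertaintyset}, exploiting the CNF form so that $\hat{K}p_i$ and $\hat{K}q_i$ track $x_i$ and $\neg x_i$ respectively. Your explicit justification for applying Proposition~\ref{prop:vumodel} with $\M_n$ in place of $\M_k$, $k>n$, is a small but welcome addition that the paper's own proof leaves implicit.
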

\begin{proof}
By Propositions \ref{prop:qbooleanf} and \ref{prop:vumodel}, we only need to show that given V-tree $\tau$ with depth $n$, $\sigma(\phi)=1$ if and only if $\M_n|^\sigma,g(\sigma)\vDash \psi$ for each $\sigma\in path(\tau)$.  Since $\phi$ is in CNF, $\psi$ is also in CNF-like form obtained by replacing each $x_i$ with $\hat{K}p_i$ and each $\neg x_i$ with $\hat{K}q_i$ for $1\leq i\leq n$. Thus we only need to show that $\sigma(x_i)=1$ iff $\M_n|^\sigma,g(\sigma)\vDash \hat{K}p_i$ and $\sigma(\neg x_i)=1$ iff $\M_n|^\sigma,g(\sigma)\vDash \hat{K}q_i$. Since $\sigma(x_i)=1$ iff $\sigma(\neg x_i)=0$, we only need to show that  $\sigma(x_i)=1$ iff $\M_n|^\sigma,g(\sigma)\vDash \hat{K}p_i$ and $\M_n|^\sigma,g(\sigma)\vDash \hat{K}p_i$ iff $\M_n|^\sigma,g(\sigma)\vDash\neg \hat{K}q_i$. By the definition of $\tau$, we know that $\sigma=A_1\dots A_n$ where $A_i$ is $a_i$ or $\bar{a}_i$ for each $1\leq i\leq n$.

Firstly, we will show that  $\M_n|^\sigma,g(\sigma)\vDash \hat{K}p_i$ if and only if $\M_n|^\sigma,g(\sigma)\vDash\neg \hat{K}q_i$. To verify the right-to-left direction, if $\M_n|^\sigma,g(\sigma)\vDash \hat{K}p_i$, it follows by the definition of $\M_n$ that $x_i\in \U|^\sigma$. Then it must be the case that $a_i$ occurs in  $\sigma$. Suppose not, $\bar{a}_i$ occurs in $\sigma$. It follows by Proposition \ref{prop:vuncertaintyset}, $\U|^\sigma=\{x_0,X_1,\dots,X_{i-1},\bar{x}_i,X_{i+1},\dots,X_n\}$. This is contrary with $x_i\in \U|^\sigma$. Thus it must be that $a_i$ occurs in  $\sigma$. It follows by Proposition \ref{prop:vuncertaintyset} that $\U|^\sigma=\{x_0,X_1,\dots,X_{i-1},x_i,X_{i+1},\dots,X_n\}$. Thus $\bar{x}_i\not\in \U|^\sigma$. By the definition of $\M_n$ and the semantics, we have $\M_n|^\sigma,g(\sigma)\vDash\neg \hat{K}q_i$. To verify the left-to-right direction, $\M_n|^\sigma,g(\sigma)\vDash\neg \hat{K}q_i$ implies that $\bar{x}_i\not\in \U|^\sigma$. For the similar reason as above, it must be the case that $\bar{a}_i$ does not occur in $\sigma$. Thus we have that $a_i$ occurs in $\sigma$. It follows by Proposition \ref{prop:vuncertaintyset} that $x_i\in \U|^\sigma$. Thus we have $\M_n|^\sigma,g(\sigma)\vDash \hat{K}p_i$.

Next we will show that $\sigma(x_i)=1$ iff $\M_n|^\sigma,g(\sigma)\vDash \hat{K}p_i$. To verify the right-to-left direction, $\sigma(x_i)=1$ implies that $A_i=a_i$. It follows by Proposition \ref{prop:vuncertaintyset} that $x_i\in\U|^\sigma$. Thus we have $\M_n|^\sigma,g(\sigma)\vDash \hat{K}p_i$. To verify the left-to-right direction, we will show that $\sigma(x_i)=0$ implies $\M_n|^\sigma,g(\sigma)\vDash \hat{K}q_i$. It follows by the definition of $\sigma(x_i)=0$ that $A_i=\bar{a}_i$. It follows by Proposition \ref{prop:vuncertaintyset} that $\bar{x}_i\in\U|^\sigma$. Thus we have $\M_n|^\sigma,g(\sigma)\vDash \hat{K}q_i$.
\end{proof}

This gives us the desired lower bound:
\begin{theorem}\label{thm.lowerb}
The model checking problem for $\EPDL^-$ is \PSPACE-hard.
\end{theorem}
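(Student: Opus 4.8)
The plan is to read off \PSPACE-hardness directly from the reduction already set up above. Recall that deciding the truth of a QBF in prenex form over a CNF matrix is \PSPACE-complete~\cite{stockmeyer1973}, so it suffices to exhibit a polynomial-time (indeed logarithmic-space) reduction from this problem to the model checking problem for $\EPDL^-$. The reduction is precisely the map sending $\alpha=Q_1x_1\cdots Q_nx_n\phi$ to the instance $(\M_n,x_0,\theta_\alpha)$, and its correctness---that $\alpha$ is true iff $\M_n,x_0\vDash\theta_\alpha$---is exactly the equivalence established in the theorem immediately preceding this statement. Thus the entire content of the present claim reduces to checking that this map is computable within the required resource bound.

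First I would confirm the size estimates. The model $\M_n$ has $2n+1$ states, and for each $1\le i\le n$ only the two action labels $a_i,\bar{a}_i$ carry non-reflexive edges; hence $|\M_n|$ is linear in $n$, as already noted. For the formula, $\theta_\alpha=\qt_1\cdots\qt_n\psi$ is obtained from $\alpha$ by three purely local rewrites: each quantifier $Q_i$ becomes the modal block $\qt_i$ (a diamond or box over the fixed program $(a_i+\bar{a}_i);?(p_i\vee q_i)$), each literal $x_i$ in the matrix becomes $\hat{K}p_i$, and each literal $\neg x_i$ becomes $\hat{K}q_i$. Each rewrite enlarges its argument by only a constant factor, so $|\theta_\alpha|$ is linear in $|\alpha|$ and the whole instance is produced in polynomial time.

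The one point that genuinely needs attention---though it is immediate from the definitions---is that $\theta_\alpha$ is a legitimate formula of the star-free fragment $\EPDL^-$: the only program constructs occurring in the blocks $\qt_i$ are atomic actions, choice $+$, composition $;$, and tests $?$, with no occurrence of the Kleene star. Hence the target of the reduction indeed lies in $\EPDL^-$ rather than in full $\EPDL$. Combining the polynomial bound on the reduction with its correctness and with the \PSPACE-completeness of QBF then yields \PSPACE-hardness of model checking $\EPDL^-$, as required. I do not expect any real obstacle here: all the difficulty was absorbed into the construction of $\M_n$ and $\theta_\alpha$ and into the correctness proof preceding this statement, leaving only the routine verifications of polynomiality and of membership in the star-free fragment.
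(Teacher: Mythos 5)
Your proposal is correct and matches the paper's own argument: the paper likewise obtains Theorem~\ref{thm.lowerb} immediately from the preceding equivalence theorem, the linear size of $\M_n$ and $\theta_\alpha$, and the \PSPACE-completeness of QBF. Your explicit check that $\theta_\alpha$ contains no Kleene star (so the target really is $\EPDL^-$) is a routine verification the paper leaves implicit, but it is the same reduction and the same reasoning.
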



\subsubsection{Upper Bound}\label{sec:epdl_minus}

In this section we give a non-trivial model checking algorithm for $\EPDLsf$ inspired by an equivalent semantics.

As we mentioned earlier, the semantics of \EPDL\ is `context-dependent': reaching the same state through different paths may affect the truth value of an epistemic subformula. This means that the usual global model checking algorithm for modal logics may not work here. In order to establish the upper bound, we first give the following equivalent semantics to \EPDLsf\ which makes the context dependency explicit in order to facilitate a local model checking algorithm. The idea is to keep the model intact but record the scope of action modalities in order to compute the right uncertainty set for epistemic subformulas. Similar idea appeared in \cite{WC12} to give an alternative semantics of public announcement logic.

\begin{definition}\label{semanticsth}
Given an uncertainty map $\calM=\langle \calS, \{\calR_{a}\mid a\in \tA\}, \calV, \calU \rangle$ and any state  $s\in \calS$, the satisfaction relation $\Vdash$ is defined using the auxiliary satisfaction relation $\Vdash_\sigma$ and auxiliary relation $\rel{\omega_\sigma}$, where $\sigma$ is a finite (possibly empty) sequence of actions in $\tA$:
	\begin{center}
		$
		\begin{array}{|lcl|}
		\hline
		\M,s\Vdash \phi  & \Leftrightarrow & \M,s\Vdash_\epsilon \phi  \\
		\M,s\Vdash_\sigma \top  & \Leftrightarrow &   \textrm{ always }\\
		\M,s\Vdash_\sigma p  & \Leftrightarrow & p \in \calV(s) \\
		
		\M,s\Vdash_\sigma \neg\phi &\Leftrightarrow& \M,s\nVdash_\sigma \phi \\
		
		\M,s\Vdash_\sigma \phi\land \psi &\Leftrightarrow& \M,s\Vdash_\sigma \phi \textrm{ and } \M,s\Vdash_\sigma \psi \\
		\M,s\Vdash_\sigma K\phi&\Leftrightarrow&\text{for all }v\in \calU|^{\sigma}: \M,v\Vdash_{\sigma}\phi \\
		\M,s\Vdash_\sigma \langle \pi\rangle\phi &\Leftrightarrow& \text{ there exists } \omega \in \L(\pi) \tr{ and }t\in\calS\\
				&& \text{ such that } s\rel{\omega_\sigma} t \text{ and } \M,t\Vdash_{\sigma r(\omega)}\phi\\
		s\rel{\epsilon_\sigma} t&\Leftrightarrow&s=t\\
		s\rel{(a\omega')_\sigma} t&\Leftrightarrow&\tr{ there exists }s'\tr{ such that } s\rel{a}s'\tr{ and }		s'\rel{\omega'_{(\sigma a)}}t\\
		s\rel{(?\phi\omega')_{\sigma}}t&\Leftrightarrow&\M,s\Vdash_{\sigma}\phi\tr{ and } s\rel{\omega'_{\sigma}} t\\
		\hline
		\end{array}
		$
	\end{center}
	where $r(\omega)$ is the sequence of actions obtained by eliminating all the tests in $\omega$. 
\end{definition}
Note that $\omega$ in the above definition  is a computation sequence, i.e., a finite sequence of actions and \EPDLsf-tests, while $\sigma$ is a test-free sequence of actions.
\medskip

The following can be proved by induction on $\eta$:
\begin{proposition}\label{pro:sequece_action_in_context}
Given an uncertainty map $\calM$ and sequences of actions and tests $\eta,\omega,\omega'$ such that $\eta=\omega \omega'$, we have $(s,t)\in\rel{\eta_\sigma}$ iff $(s,t)\in\rel{\omega_\sigma}\circ\rel{\omega'_{\sigma r(\omega)}}$ for any sequence of actions $\sigma$.
\end{proposition}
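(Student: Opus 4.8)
The plan is to prove the statement by induction on the structure of the prefix $\omega$ (equivalently, on its length), keeping $\sigma$, $\omega'$, and the two endpoints universally quantified so that the induction hypothesis is available in an arbitrary context. The only real content is the bookkeeping of the context superscript, and the arithmetic fact that drives it is that $r$ is a homomorphism for concatenation which simply erases tests: $r(a\omega'')=a\,r(\omega'')$ and $r(?\phi\,\omega'')=r(\omega'')$, so that $(\sigma a)\,r(\omega'')=\sigma\,r(a\omega'')$ and $\sigma\,r(?\phi\,\omega'')=\sigma\,r(\omega'')$.

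For the base case $\omega=\epsilon$ we have $\eta=\omega'$ and $r(\epsilon)=\epsilon$, hence $\sigma\,r(\omega)=\sigma$. Since $\rel{\epsilon_\sigma}$ is the identity relation by its defining clause, the composite $\rel{\epsilon_\sigma}\circ\rel{\omega'_\sigma}$ collapses to $\rel{\omega'_\sigma}=\rel{\eta_\sigma}$, which is exactly what is required.

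For the inductive step I would split on the first symbol of $\omega$. If $\omega=a\omega''$ then $\eta=a(\omega''\omega')$, and unfolding the action clause gives $s\rel{\eta_\sigma}t$ iff there is $s'$ with $s\rel{a}s'$ and $s'\rel{(\omega''\omega')_{\sigma a}}t$. Applying the induction hypothesis to the shorter prefix $\omega''$ in context $\sigma a$ rewrites the inner relation as $\rel{\omega''_{\sigma a}}\circ\rel{\omega'_{(\sigma a)r(\omega'')}}$; using $(\sigma a)r(\omega'')=\sigma\,r(\omega)$ and then folding the action clause back up (now inside $\rel{\omega_\sigma}=\rel{(a\omega'')_\sigma}$) yields $\rel{\omega_\sigma}\circ\rel{\omega'_{\sigma r(\omega)}}$, as desired. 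If instead $\omega=?\phi\,\omega''$, the test clause gives $s\rel{\eta_\sigma}t$ iff $\M,s\Vdash_\sigma\phi$ and $s\rel{(\omega''\omega')_\sigma}t$; the induction hypothesis applied to $\omega''$ in the unchanged context $\sigma$ turns this into $\M,s\Vdash_\sigma\phi$ together with $(s,t)\in\rel{\omega''_\sigma}\circ\rel{\omega'_{\sigma r(\omega)}}$, which is precisely $(s,t)\in\rel{(?\phi\,\omega'')_\sigma}\circ\rel{\omega'_{\sigma r(\omega)}}=\rel{\omega_\sigma}\circ\rel{\omega'_{\sigma r(\omega)}}$.

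I do not expect a serious obstacle: the proposition is essentially an associativity/compositionality statement for the context-indexed relations. The one point that needs care — and the only place the argument could go wrong if treated sloppily — is the alignment of the context superscripts after each unfolding, which is exactly why the homomorphism property of $r$ must be invoked explicitly in the action case. A second, subtler point worth flagging is that although $\Vdash_\sigma$ and $\rel{\cdot_\sigma}$ are defined by mutual recursion, no circularity arises here: in the test case the formula $\phi$ is evaluated in the same context $\sigma$ on both sides of the equivalence, so its truth value cancels and need not be analysed further, while all tests buried inside $\omega''$ or $\omega'$ are handled uniformly by the induction hypothesis and the recursive clauses themselves.
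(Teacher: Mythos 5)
Your proof is correct and follows essentially the same route as the paper's: induction along the sequence with a case split on whether the leading symbol is an action $a$ or a test $?\phi$, unfolding the corresponding clause, invoking the induction hypothesis, and realigning the context superscript via the fact that $r$ erases tests and commutes with concatenation. The only (cosmetic) difference is that you induct on the prefix $\omega$ and treat $\omega=\epsilon$ as an explicit base case, whereas the paper inducts on $|\eta|$ and leaves that degenerate split implicit.
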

\begin{proof}
	We prove it by induction on $|\eta|$. If $|\eta|\leq 2$, it is obvious by the definition. If $|\eta|>2$, there are two cases, that is, $\eta=a \eta'$ or $\eta=?\phi\eta'$.
	
	Case $\eta=a \eta':$ We have $\omega=a\omega''$ for some initial segment $\omega''$ of $\eta'$, and $(s,t)\in\rel{(a\eta')_\sigma}$ iff there exists $s'$ such that $s\rel{a}s'$ and $(s',t)\in\rel{\eta'_{\sigma a}}$. By IH, we have $\rel{\eta'_{\sigma a}}=\rel{\omega''_{\sigma a}}\circ\rel{\omega'_{\sigma a r(\omega'')}}$. Thus we have $(s',t)\in\rel{\eta'_{\sigma a}}$ iff there exists $t' $ such that $(s',t')\in \rel{\omega''_{\sigma a}}$ and $(t',t)\in \rel{\omega'_{\sigma a r(\omega'')}}$. By definition, we have that $s\rel{a} s'$ and $(s',t')\in \rel{\omega''_{\sigma a}}$ iff $(s,t')\in\rel{a\omega''_{\sigma}}$. Thus we have $(s,t)\in \rel{a\omega''_\sigma}\circ\rel{\omega'_{\sigma a r(\omega'')}}$, namely $(s,t)\in \rel{\omega_\sigma}\circ\rel{\omega'_{\sigma r(\omega)}}$.
	
	Case $\eta=?\phi\eta':$ We have $\omega=?\phi\omega''$ for some initial segment $\omega''$ of $\eta'$, and $(s,t)\in\rel{(?\phi\eta')_\sigma}$ iff $\calM,s\Vdash_\sigma\phi$ and $s\rel{\eta'_\sigma}t$. By IH, we have $s\rel{\eta'_\sigma}t$ iff $(s,t)\in \rel{\omega''_{\sigma}}\circ\rel{\omega'_{\sigma r(\omega'')}}$. Thus we have there exists $s'$ such that $(s,s')\in \rel{\omega''_\sigma}$ and $(s',t)\in\rel{\omega'_{\sigma r(\omega'')}}$. This follows that $(s,s')\in \rel{(?\phi\omega'')_\sigma}$, and $(s,t)\in \rel{(?\phi\omega'')_\sigma}\circ \rel{\omega'_{\sigma r(?\phi\omega'')}}$, namely $(s,t)\in \rel{\omega_\sigma}\circ\rel{\omega'_{\sigma r(\omega)}}$.
\end{proof}

In the following we show that $\Vdash$ coincides with $\vDash$.
\begin{theorem}
	Given an uncertainty map $\calM$ and an action sequence $\sigma$, if $\calU|^\sigma\neq \emptyset$, we have that for each $s\in\calU|^\sigma$,
	\begin{itemize}
		\item[(i)] $\calM|^\sigma,s\llrr{\pi}\calM',s'$ iff there exists $\omega\in\L(\pi)$ such that $\calM'=\calM|^{\sigma r(\omega)}$ and $s\rel{\omega_\sigma}s'$,
		\item[(ii)] $\calM|^\sigma,s\vDash\phi$ iff $\calM,s\Vdash_\sigma\phi$.
	\end{itemize}
\end{theorem}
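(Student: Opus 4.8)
The plan is to prove both clauses \emph{simultaneously} by induction on a complexity measure that makes the mutual recursion between formulas and programs well-founded. Since $\langle\pi\rangle\phi$ (equivalently $[\pi]\phi$) refers to part~(i) for $\pi$ and to part~(ii) for its subformula $\phi$, a test $?\psi$ inside a program refers to part~(ii) for $\psi$, and all the other constructors strictly decrease $|\cdot|$ as fixed by the size clauses, induction on $|\phi|$ for (ii) and $|\pi|$ for (i) -- taken together as one well-founded order -- lets every appeal to the induction hypothesis (IH) land on a strictly smaller object. Throughout, I would exploit that updating $\calM$ by an action leaves $\S$ and $\V$ untouched and only replaces the uncertainty set, so that $\U_{\calM|^\sigma}=\calU|^\sigma$.

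For clause (ii) the propositional skeleton is routine: $\top$ and $p$ hold irrespective of context because $\V$ is preserved, and $\neg,\wedge$ follow immediately from the IH. The only genuinely epistemic case is $K\phi$: here both semantics quantify over the \emph{same} set $\calU|^\sigma=\U_{\calM|^\sigma}$, the context is \emph{not} shifted, and every witness $v$ already lies in $\calU|^\sigma$, so the IH for $\phi$ at context $\sigma$ applies verbatim. For $\langle\pi\rangle\phi$ I would first rewrite the standard clause using (i) to replace the abstract relation $(\calM|^\sigma,s)\llrr{\pi}(\calM',s')$ by the existence of $\omega\in\L(\pi)$ with $\calM'=\calM|^{\sigma r(\omega)}$ and $s\rel{\omega_\sigma}s'$, and then apply the IH~(ii) for $\phi$ at the \emph{shifted} context $\sigma r(\omega)$ and point $s'$; the box case $[\pi]\phi$ is then the De~Morgan dual via the $\neg$ clause.

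For clause (i) the base cases are direct unfoldings of Definition~\ref{semanticsth}: $\L(a)=\{a\}$ reduces $s\rel{a_\sigma}s'$ to $s\rel{a}s'$ with $\calM'=\calM|^{\sigma a}$, and for $?\psi$ the relation $s\rel{(?\psi)_\sigma}s'$ forces $s=s'$, $\calM'=\calM|^\sigma$ (as $r(?\psi)=\epsilon$), and $\calM,s\Vdash_\sigma\psi$, which the IH~(ii) for $\psi$ converts into the standard test condition $\calM|^\sigma,s\vDash\psi$. The choice case is immediate from $\L(\pi_1+\pi_2)=\L(\pi_1)\cup\L(\pi_2)$ and $\llrr{\pi_1+\pi_2}=\llrr{\pi_1}\cup\llrr{\pi_2}$. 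The sequential case $\pi_1;\pi_2$ is the heart of (i): splitting the standard relation through an intermediate $(\calM'',s'')$, the IH~(i) for $\pi_1$ supplies $\omega_1$ and $\calM''=\calM|^{\sigma r(\omega_1)}$, the IH~(i) for $\pi_2$ \emph{at the shifted context} $\sigma r(\omega_1)$ supplies $\omega_2$, and Proposition~\ref{pro:sequece_action_in_context} recombines $s\rel{(\omega_1)_\sigma}\circ\rel{(\omega_2)_{\sigma r(\omega_1)}}s'$ into $s\rel{(\omega_1\omega_2)_\sigma}s'$ with $r(\omega_1\omega_2)=r(\omega_1)r(\omega_2)$.

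The step I expect to be the main obstacle is the bookkeeping that legitimises every appeal to the IH made at a shifted context: whenever I invoke (i) or (ii) at $\sigma r(\omega)$ on a state $s'$, the theorem's hypotheses demand $\calU|^{\sigma r(\omega)}\neq\emptyset$ and $s'\in\calU|^{\sigma r(\omega)}$. I would discharge this with an auxiliary induction on $\omega$ showing that $s\rel{\omega_\sigma}s'$ entails that $s'$ is reachable from $s$ by exactly the action word $r(\omega)$ (tests do not move the state), so that $s\in\calU|^\sigma$ forces $s'\in\calU|^{\sigma r(\omega)}$ and in particular keeps the new uncertainty set nonempty. Getting this relocation property and its interaction with $r(\cdot)$ exactly right -- so that the contexts threaded through the program cases and the uncertainty set consulted in the $K$ case always match -- is the delicate part; once it is in place, the remaining cases are the mechanical unfoldings sketched above.
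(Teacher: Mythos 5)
Your proposal is correct and follows essentially the same route as the paper: a simultaneous induction on programs and formulas in which the sequential-composition case of (i) is discharged by Proposition~\ref{pro:sequece_action_in_context} and the $\langle\pi\rangle\phi$ case of (ii) is reduced to (i) at the shifted context $\sigma r(\omega)$. The bookkeeping you single out as the delicate step is handled in the paper by the observation that the standard semantics only relates \emph{pointed} uncertainty maps, so the intermediate state automatically lies in $\calU|^{\sigma r(\omega)}$; your relocation lemma, proved by induction on $\omega$, establishes exactly the same fact.
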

\begin{proof}
	The proof is by simultaneous induction on $\pi$ and $\phi$ (due to the test actions). For (i), we will only focus on the case of $\pi_1;\pi_2$; the other cases are straightforward.
	
	Case $\pi_1;\pi_2$: We only show the direction from left to right; the other direction is similar. It follows by assumption that there is pointed uncertainty map $\calN,t$ such that $\calM|^\sigma,s\llrr{\pi_1}\calN,t$ and $\calN,t\llrr{\pi_2}\calM',s'$. By IH, we have that there exists $\omega\in\L(\pi_1)$ such that $\calN=\calM|^{\sigma r(\omega)}$ and $s\rel{\omega_\sigma}t$. Since $\calN,t$ is a pointed uncertainty map and $\calN=\calM|^{\sigma r(\omega)}$, we have $t\in \calU|^{\sigma r(\omega)}$. By IH and $\calM|^{\sigma r(\omega)},t\llrr{\pi_2}\calM',s'$, we have that there exists $\omega'\in \L(\pi_2)$ such that $\calM|^{\sigma r(\omega) r(\omega')}=\calM|^{\sigma r(\omega\omega')}=\calM'$ and $t\rel{\omega'_{\sigma r(\omega)}}s'$. By Proposition~\ref{pro:sequece_action_in_context}, it follows that $\omega\omega'\in\L(\pi_1;\pi_2)$ and $s\rel{(\omega\omega')_\sigma}s'$.

	For (ii), we will focus on the case of $\lr{\pi}\phi$; the other cases are straightforward.
	
	Case $\lr{\pi}\phi$: We have $\calM|^\sigma,s\vDash\lr{\pi}\phi$ if and only if there is pointed uncertainty map $\calM',s'$ such that $\calM|^\sigma,s\llrr{\pi}\calM',s'$ and $\calM',s'\vDash\phi$. By (i), it follows that $\calM|^\sigma,s\llrr{\pi}\calM',s'$ iff there exists $\omega\in\L(\pi)$ such that $\calM'=\calM|^{\sigma r(\omega)}$ and $s\rel{\omega_\sigma}s'$. By IH, it follows that $\calM|^{\sigma r(\omega)},s'\vDash\phi$ iff $\calM,s'\Vdash_{\sigma r(\omega)}\phi$. Thus we have $\calM,s\Vdash\lr{\pi}\phi$.
\end{proof}

Let $\sigma$ be $\epsilon$, we have the equivalence of $\Vdash$ and $\vDash$.
\begin{corollary}\label{corollary:correspondence_context_dep}
	Given pointed uncertainty map $\calM,s$, we have $\calM,s\vDash\phi$ iff $\calM,s\Vdash\phi$ for each $\phi\in\EPDL^-$.
\end{corollary}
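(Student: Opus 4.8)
The plan is to obtain the statement as the instance $\sigma=\epsilon$ of the theorem just proved, so that the work reduces to checking that the side conditions of that theorem hold for the empty action sequence and then unwinding the definitions.

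First I would record the two bookkeeping facts $\calU|^\epsilon=\calU_\calM$ and $\calM|^\epsilon=\calM$: applying no action leaves both the uncertainty set and the whole model unchanged. Because $\calM$ is an uncertainty map, $\calU_\calM$ is non-empty by definition, so the hypothesis $\calU|^\epsilon\neq\emptyset$ of the theorem is met. Moreover, since $\calM,s$ is a \emph{pointed} uncertainty map, its designated state $s$ lies in $\calU_\calM=\calU|^\epsilon$ (this is exactly the point stressed in the Remark of Section~\ref{sec.basic}), so $s$ is an admissible choice for the universally quantified state in the theorem.

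Next I would instantiate clause (ii) of the theorem at $\sigma=\epsilon$ and this particular $s$. It yields $\calM|^\epsilon,s\vDash\phi$ iff $\calM,s\Vdash_\epsilon\phi$, that is, $\calM,s\vDash\phi$ iff $\calM,s\Vdash_\epsilon\phi$, using $\calM|^\epsilon=\calM$. Finally, the defining clause $\calM,s\Vdash\phi\Leftrightarrow\calM,s\Vdash_\epsilon\phi$ from Definition~\ref{semanticsth} rewrites the right-hand side to $\calM,s\Vdash\phi$, giving precisely the desired equivalence for every $\phi\in\EPDL^-$.

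There is essentially no obstacle here: the corollary is a pure specialization of the theorem, and the only thing needing care is verifying the applicability of the theorem at $\sigma=\epsilon$, namely that $\calU|^\epsilon=\calU_\calM$ is non-empty and that the designated point of a pointed uncertainty map always belongs to the uncertainty set. Both hold immediately, so the equivalence of $\Vdash$ and $\vDash$ follows at once.
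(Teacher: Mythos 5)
Your proposal is correct and is exactly the paper's argument: the paper derives the corollary simply by setting $\sigma=\epsilon$ in the preceding theorem, which is what you do, with the added (welcome) care of verifying $\calM|^\epsilon=\calM$, $\calU|^\epsilon=\calU_\calM\neq\emptyset$, and $s\in\calU_\calM$ before instantiating clause (ii) and the clause $\calM,s\Vdash\phi\Leftrightarrow\calM,s\Vdash_\epsilon\phi$.
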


This alternative semantics induces a natural algorithm to compute the truth value of an $\EPDLsf$ formula w.r.t.\ to a pointed uncertainty map. The idea is to recursively call a function $MC(\M,s,\sigma, \phi)$ which returns the truth value of a subformula $\phi$ on state $s$ given the context of $\sigma$ while keeping $\M$ intact. Note that, we do not need to compute all the $MC(\M,s,\sigma, \phi)$ for each $\sigma$ and each subformula $\phi$. The only tricky part comes when evaluating $\lr{\pi}\phi$ formulas since it is too space consuming to compute the whole set of $\L(\pi)$ in the search of the right $\omega$. Instead, we can generate one by one in some lexicographical order all the possible sequences up to a bound based on the atomic actions and tests occurring in the formula, and then test whether it belongs to the program $\pi$. Note that in this way, we can use the space repeatedly, and the membership testing of $\L(\pi)$ is not expensive (\textsc{NLOGspace}-complete according to \cite{JR91}).

In the appendix we present three algorithms based on matrix representation of the model: Algorithm~\ref{CCS} computes the uncertainty set $\U|^\sigma$; Algorithm \ref{CPW} computes $\rel{w_{\sigma}}$ and Algorithm \ref{MCAG} is the main model checking algorithm. Note that Algorithms \ref{CPW} and \ref{MCAG} involve mutual recursion of each other due to the tests in programs. However, the depth of the recursion is bounded by the length of the formula, and for each call polynomial space suffices. The detailed algorithms and complexity analysis can be found in the appendix. It is not hard to show the following (based on Theorem \ref{thm.lowerb})

\begin{theorem}[Upper bound]
The model checking problem of $\EPDL^-$ is in \PSPACE. Thus it is \PSPACE-complete.	
\end{theorem}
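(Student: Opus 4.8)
The plan is to establish the \PSPACE\ upper bound by giving an alternating polynomial-time (equivalently, polynomial-space) recursive model-checking procedure based on the context-dependent semantics $\Vdash_\sigma$ of Definition~\ref{semanticsth}, whose correctness is guaranteed by Corollary~\ref{corollary:correspondence_context_dep}. The combination with the lower bound of Theorem~\ref{thm.lowerb} then immediately yields \PSPACE-completeness, since \PSPACE\ is closed under complement and the two bounds match.

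First I would bound the size of the contexts that ever arise. The key observation is that in evaluating $\M,s\Vdash \phi$ we only ever call $\M,t\Vdash_\sigma \psi$ for $\psi\in sub(\phi)$ and for a context $\sigma$ which is a test-free action sequence whose length is bounded by the nesting depth of program modalities in $\phi$; more precisely, each atomic action appended to $\sigma$ corresponds to consuming one action symbol from a surrounding $\lr{\pi}$ or $[\pi]$ modality, and because the fragment is star-free, the total number of action symbols that a single computation sequence $\omega\in\L(\pi)$ can contain is at most $|\pi|$. Hence $|\sigma|\le |\phi|$ throughout, and each $\sigma$ can be stored in polynomial space. I would record this as an explicit lemma, since it is what makes the whole recursion fit in \PSPACE.

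Next I would describe the recursive procedure $MC(\M,s,\sigma,\phi)$ that follows the clauses of $\Vdash_\sigma$ directly. The Boolean and atomic cases are trivial. For $K\psi$ one first computes the finite uncertainty set $\U|^\sigma$ (Algorithm~\ref{CCS}, which iterates the one-step update $|^a$ at most $|\sigma|$ times on the finite state set, all in polynomial space) and then universally checks $\M,v\Vdash_\sigma \psi$ for every $v\in\U|^\sigma$. The only delicate case is $\lr{\pi}\psi$: rather than materialising $\L(\pi)$, I would, as the paragraph before the theorem suggests, enumerate candidate computation sequences $\omega$ one at a time in lexicographic order, each of length bounded polynomially by $|\pi|$ (since $\pi$ is star-free), test membership $\omega\in\L(\pi)$ --- which is in \textsc{NLogspace} by \cite{JR91} --- use Algorithm~\ref{CPW} to decide whether $s\rel{\omega_\sigma}t$ for some $t$, and then recurse on $MC(\M,t,\sigma r(\omega),\psi)$. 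Because the candidate $\omega$ is reused in the same block of memory after each failed attempt, the enumeration costs no extra space beyond one $\omega$ at a time. The dual case $[\pi]\psi$ is handled by the negation.

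The main obstacle, and the point I would treat most carefully, is the mutual recursion between $MC$ and the path-relation computation $\rel{\omega_\sigma}$: the clause for $?\phi$ inside a computation sequence invokes $\M,s\Vdash_\sigma\phi$ on a strictly smaller formula, so Algorithms~\ref{CPW} and~\ref{MCAG} call each other. I would argue termination and the space bound together by a single induction on formula size: every recursive call is on a proper subformula of the current $\phi$, so the recursion depth is at most $|\phi|$; along any root-to-leaf branch of the call tree the auxiliary data (the current state, the context $\sigma$ of length $\le|\phi|$, and a single candidate $\omega$ of length $\le|\pi|$) occupy polynomial space, and since a depth-first evaluation of an alternating recursion reuses the stack frames, the total space is the product of the depth and the per-frame size, which is polynomial. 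This establishes $\EPDLsf\in\PSPACE$; together with Theorem~\ref{thm.lowerb} it gives \PSPACE-completeness.
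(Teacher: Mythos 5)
Your proposal is correct and follows essentially the same route as the paper: a recursive procedure $MC(\M,s,\sigma,\phi)$ built on the context-dependent semantics $\Vdash_\sigma$ (justified by Corollary~\ref{corollary:correspondence_context_dep}), with lexicographic one-at-a-time enumeration of candidate computation sequences, \textsc{NLogspace} membership testing via \cite{JR91}, the mutual recursion between the model-checking and path-relation computations bounded in depth by $|\phi|$, and space reuse across frames giving a polynomial bound, combined with Theorem~\ref{thm.lowerb} for completeness. Your explicit lemma that $|\sigma|\le|\phi|$ is a welcome sharpening of what the paper only states implicitly in its appendix analysis ($|\omega|+|\sigma|\le k$).
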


\subsection{Upper Bounds for model checking EPDL}

In this section, we give an \EXPTIME\  model checking method for the full \EPDL\ via model checking \EPDL\ over two-dimensional models with both epistemic and action relations. Let us first define such models.
\begin{definition}[Epistemic Temporal Structure]
	An \textit{Epistemic Temporal Structure} (ETS) is a Kripke model with both epistemic and action relations. Formally, an ETS\ model $\etsM$ is a tuple $\lr{\S,\{\R_a\mid a\in\tA\},\sim,\V}$, where $\R_a$ is a binary relation on $\S$, $\sim$ is an equivalence relation on $\S$ and $\V:\S\to 2^\tP$ is a valuation function. 
\end{definition}

Now we define an alternative semantics of \EPDL\ over ETSs.\footnote{Here we abuse the notation $\Vdash$ to denote the new semantics. Note that it is different from the alternative semantics in the previous section. }

\begin{definition}[ETS Semantics]\label{def.ets}
	Given any ETS \\ model $\etsM=\langle \S, \{\R_{a}\mid a\in \tA\}, \sim,\V \rangle$ and any state $s\in \S$, the satisfaction relation for \EPDL\ formulas is defined
	as follows (the Boolean cases are as in the standard modal logic):
	$$\begin{array}{|lcl|}
	\hline
	\etsM,s\Vdash K\phi &\LRA& \forall u\in\S: s\sim u \text{ implies }\etsM,u\Vdash \phi\\
	\etsM,s\Vdash [\pi]\phi &\LRA& \forall t\in S: s\rel{\pi}t \text{ implies } \etsM,t\Vdash \phi\\
	\rel{a}&=& \R_a\\
	\rel{?\phi}&=&\{(s,s)\mid \etsM,s\Vdash\phi \}\\
	\xrel{\pi_1;\pi_2}&=&\rel{\pi_1}\circ\rel{\pi_2}\\
	\xrel{\pi_1+\pi_2}&=&\rel{\pi_1}\cup\rel{\pi_2}\\
	\rel{\pi^{*}}&=&(\rel{\pi})^\star\\
	\hline
	\end{array}
	$$
	
\noindent 	where $\circ,\cup$, $^\star$ at right-hand side denote the usual composition, union and reflexive transitive closure of binary relations respectively.
\end{definition}

We can turn a Kripke model without the epistemic relation into an ETS model by essentially considering all the possible uncertainty sets.

\begin{definition}
	Given any Kripke model $\calM=\lr{\calS,\{\calR_a\mid a\in\tA \},\calV}$, we define the ETS model $\calM^\bullet
    $ as follows:
	\[
	\begin{array}{lcl}
	\calS^\bullet&=&\{s_\Gamma\mid s\in \calS,\Gamma\in 2^\calS,s\in\Gamma \}\\
	\calR^\bullet_a&=&\{(s_\Gamma,t_\Delta)\mid s\rel{a}t,\Delta=\Gamma|^a \}\\
	\sim^\bullet&=&\{(s_\Gamma,t_\Delta)\mid \Gamma=\Delta \}\\
	\calV^\bullet(s_\Gamma)&=&\calV(s)
	\end{array}
	\]
	
	\noindent where $\Gamma|^a=\{t\in\calS\mid \exists s\in\Gamma$ such that $s\rel{a}t \}$. For any Kripke model $\M$ and any $\Gamma\in 2^\calS\backslash\{\emptyset\}$, let $\calM^\Gamma$ be the uncertainty map $\lr{\M,\Gamma}$.
\end{definition}

Note that each $s_\Gamma$ can be viewed as an uncertainty set ($\Gamma$) with a designated state ($s$), and the definition of $\R_a$ captures the update in the $\vDash$ semantics of \EPDL, and $\M^\bullet$ unravels all the updates in a whole picture. Note that the size of $\M^\bullet$ is $|\S|\cdot 2^{|\S|-1}$ where $\S$ is the set of states of $\M$.


Now we can show that $\vDash$ and $\Vdash$ coincide w.r.t.\ uncertainty map $\M^\Gamma$ and ETS model $\M^\bullet$ (the proofs are omitted due to the lack of space).
\begin{proposition}
	Given any map $\calM$, we have
	\begin{itemize}
		\item[(i)]  $\calM^\Gamma,s\llrr{\pi}\calM^\Delta,t$ iff $s_\Gamma\rel{\pi}t_{\Delta}$ in $\M^\bullet$;\footnote{Cf. the definition of $\rel{\pi}$ in Def.\ \ref{def.ets}. }
		\item[(ii)] $\calM^\Gamma,s\vDash\phi$ iff $\calM^\bullet,s_\Gamma\Vdash\phi$.
	\end{itemize}
\end{proposition}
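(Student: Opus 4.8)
**The plan is to prove both statements by simultaneous induction on the structure of the program $\pi$ and the formula $\phi$**, exactly as was done for the previous equivalence theorem (the one relating $\Vdash_\sigma$ to $\vDash$). The simultaneity is forced by the test constructor $?\phi$: the transition relation $\rel{\pi}$ in the ETS semantics depends on the truth of test formulas, and those formulas are evaluated via statement (ii), while (ii) in the modal case $[\pi]\phi$ or $\lr{\pi}\phi$ depends on the transition relation characterised in (i). So I would set up a single induction whose hypothesis bundles both (i) and (ii) for all strictly smaller programs and formulas.

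\textbf{Statement (i): transition relations agree.} The key observation is that the ETS construction $\M^\bullet$ is designed precisely so that a single step $s_\Gamma \rel{a} t_\Delta$ holds iff $s \rel{a} t$ in $\M$ and $\Delta = \Gamma|^a$; this is exactly the update $\M^\Gamma,s\llrr{a}\M^\Delta,t$ demands in the $\vDash$-semantics (recall $(\M,s)\llbracket a\rrbracket(\M',s')$ iff $\M'=\M|^a$ and $s\rel{a}s'$). So the atomic case is immediate from the definition of $\calR^\bullet_a$. For the test case $?\psi$, both semantics leave the point and the uncertainty set fixed and gate on the truth of $\psi$; here I would invoke the induction hypothesis (ii) for $\psi$ to transfer $\M^\Gamma,s\vDash\psi$ to $\M^\bullet,s_\Gamma\Vdash\psi$. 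The compound cases $\pi_1;\pi_2$, $\pi_1+\pi_2$, and $\pi^*$ then follow because both semantics interpret these as composition, union, and reflexive transitive closure of the respective one-step relations, and these operations commute with the bijective correspondence $\M^\Gamma,s \leftrightarrow s_\Gamma$ established for the components.

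\textbf{Statement (ii): truth agrees.} The Boolean base cases are routine since $\calV^\bullet(s_\Gamma)=\calV(s)$. The interesting epistemic case $K\phi$ works because $\sim^\bullet$ relates exactly those $s_\Gamma,t_\Delta$ with $\Gamma=\Delta$, so the $\sim^\bullet$-neighbours of $s_\Gamma$ are precisely $\{t_\Gamma \mid t\in\Gamma\}$, which mirrors the clause $\M^\Gamma,s\vDash K\phi \iff \forall u\in\Gamma:\M^\Gamma,u\vDash\phi$; the induction hypothesis on $\phi$ closes it. For the modal case $[\pi]\phi$ (and dually $\lr{\pi}\phi$), I would use statement (i) to identify the $\rel{\pi}$-successors $t_\Delta$ of $s_\Gamma$ with the pairs $(\M^\Delta,t)$ reachable via $\llrr{\pi}$ from $(\M^\Gamma,s)$, then apply the induction hypothesis (ii) to $\phi$ at each successor.

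\textbf{The main obstacle} is handling the Kleene star $\pi^*$ cleanly in statement (i), since the reflexive transitive closure introduces unboundedly long computation sequences and one must verify that no ``spurious'' ETS-states $s_\Gamma$ arise whose $\Gamma$ fails to be reachable as an actual update $\calU|^{\sigma}$. The subtlety is that $\M^\bullet$ contains a node $s_\Gamma$ for \emph{every} $\Gamma\in 2^\calS$ with $s\in\Gamma$, whereas in a genuine run the uncertainty set is always of the form $\Gamma|^{\sigma}$; I would argue that this causes no mismatch because statement (i) is quantified only over source nodes $s_\Gamma$ that we actually start from, and the $\calR^\bullet_a$ edges propagate $\Gamma$ forward deterministically as $\Gamma|^a$, so the closure computed in $\M^\bullet$ from a given $s_\Gamma$ visits exactly the updated sets that $\llrr{\pi^*}$ produces from $(\M^\Gamma,s)$, and conversely. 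Establishing this matching of the two reflexive transitive closures — by an inner induction on the number of iterations, using the single-step agreement together with the already-proved simpler program cases — is where the real care is needed.
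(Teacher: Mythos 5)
Your proposal is correct: the paper itself omits the proof of this proposition (citing lack of space), and your simultaneous induction on $\pi$ and $\phi$ --- atomic, test, composition, union and star cases for (i), with the $K$ and $[\pi]$ cases of (ii) riding on (i) --- is precisely the argument the construction of $\calM^\bullet$ is designed to support, mirroring the paper's own technique for the earlier equivalence between $\vDash$ and the context-dependent semantics $\Vdash_\sigma$. Your worry about ``spurious'' states $s_\Gamma$ is also correctly resolved: since $(\calM^\Gamma,s)\mapsto s_\Gamma$ is a bijection onto $\calS^\bullet$ and single $\llrr{\pi}$-steps correspond to single $\rel{\pi}$-steps, the two reflexive transitive closures agree by an inner induction on the number of iterations.
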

\begin{corollary}\label{corollary:ETS_EPDL}
	Given an uncertainty map $\calM=\lr{\N,\U}$ and $s\in\calU$, we have $\calM,s\vDash\phi$ iff $\calN^\bullet,s_\calU\Vdash\phi$.
\end{corollary}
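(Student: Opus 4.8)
The plan is to read this corollary as the single instance $\Gamma=\U$ of part (ii) of the Proposition immediately preceding it, so that the whole argument reduces to unfolding the definition of $\calM^\Gamma$ and invoking that Proposition once. First I would record what the hypothesis supplies: an uncertainty map $\calM=\lr{\N,\U}$ with $s\in\U$. In particular $\U$ is a nonempty subset of the state set of $\N$, so $\U\in 2^{\S_\N}\setminus\{\emptyset\}$, and since $s\in\U$ the point $s_\U$ satisfies the side condition $s\in\Gamma$ in the definition of $\S^\bullet$; hence $s_\U$ is a legitimate state of $\N^\bullet$.

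Next I would unfold the notation $\calM^\Gamma=\lr{\M,\Gamma}$, instantiating the base Kripke model as $\N$ and $\Gamma$ as $\U$. This gives $\N^\U=\lr{\N,\U}$, which is literally the uncertainty map $\calM$ of the hypothesis. Applying part (ii) of the Proposition to $\N$ with $\Gamma=\U$ then yields $\N^\U,s\vDash\phi$ iff $\N^\bullet,s_\U\Vdash\phi$, and rewriting $\N^\U$ as $\calM$ produces exactly $\calM,s\vDash\phi$ iff $\N^\bullet,s_\U\Vdash\phi$, which is the claim. So at the level of the corollary there is really nothing to do beyond this instantiation.

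Since the corollary is immediate once the Proposition is granted, the genuine content lives in that Proposition, whose proof the text omits. If I had to supply it, I would run a simultaneous induction on $\pi$ and $\phi$, the two parts being inseparable because a test $?\phi$ occurring inside a program refers back to formula satisfaction. Part (i) is the engine: a single $\llrr{\pi}$-transition between $\calM^\Gamma,s$ and $\calM^\Delta,t$ is matched by a single $\rel{\pi}$-edge from $s_\Gamma$ to $t_\Delta$ in $\M^\bullet$, the key point being that the definition of $\R^\bullet_a$ stores the updated set $\Delta=\Gamma|^a$ in exactly the way the $\vDash$-semantics updates the uncertainty set along an action. The Boolean and $K$ cases of (ii) are then routine once one observes that the $\sim^\bullet$-class of $s_\Gamma$ is precisely $\{t_\Gamma\mid t\in\Gamma\}$, mirroring the uncertainty set $\Gamma$.

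The hard part will be the iteration case $\pi^*$ of part (i), since this is full \EPDL\ with the star. There I would argue by induction on the number of iteration steps: using the already-established (i) for the base program $\pi$, each $\llrr{\pi}$-step becomes one $\rel{\pi}$-edge in $\M^\bullet$ that correctly propagates the second coordinate, and composing these edges gives a $(\rel{\pi})^\star=\rel{\pi^*}$-edge; conversely each $\rel{\pi^*}$-path in $\M^\bullet$ decomposes into finitely many $\rel{\pi}$-edges, every one of which is realised by a $\llrr{\pi}$-step on the corresponding uncertainty maps. The only delicate bookkeeping is checking that the subscript $\Gamma$ stays synchronised with the actual uncertainty set throughout the whole iteration, which is guaranteed step-by-step by the $\Delta=\Gamma|^a$ clause.
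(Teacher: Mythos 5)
Your proposal matches the paper exactly: the corollary is obtained by instantiating part (ii) of the preceding Proposition with $\Gamma=\calU$, noting that $\calN^\calU=\lr{\N,\U}=\calM$ and that $s\in\calU$ makes $s_\calU$ a legitimate state of $\calN^\bullet$ (the paper gives no further proof, and omits the Proposition's own proof for lack of space). Your supplementary sketch of the Proposition---simultaneous induction on $\pi$ and $\phi$, with the $\sim^\bullet$-class of $s_\Gamma$ being $\{t_\Gamma\mid t\in\Gamma\}$ for the $K$ case and step-by-step decomposition for $\pi^*$---is consistent with the intended argument and fills the gap the paper leaves open.
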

Based on the above corollary we can have a model checking method via model checking \EPDL\ over ETS models.
\begin{proposition}
	The model checking problem of \EPDL\ on uncertainty maps is in \EXPTIME.
\end{proposition}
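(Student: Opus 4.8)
The plan is to leverage Corollary~\ref{corollary:ETS_EPDL}, which tells us that checking $\calM,s\vDash\phi$ on an uncertainty map $\calM=\lr{\N,\U}$ is equivalent to checking $\calN^\bullet,s_\U\Vdash\phi$ on the associated ETS model. So the task reduces to two things: first, construct $\calN^\bullet$; second, model check a single \EPDL\ formula over this ordinary two-dimensional Kripke model under the standard PDL-style semantics $\Vdash$ from Definition~\ref{def.ets}. The first step is a direct exponential blow-up: by the remark following the definition, $|\calN^\bullet|=|\S|\cdot 2^{|\S|-1}$, which is exponential in the size of the input uncertainty map but can be written down explicitly in exponential time (enumerate all pairs $s_\Gamma$ with $s\in\Gamma\subseteq\S$, compute each $\Gamma|^a$, and lay down the $\calR^\bullet_a$ and $\sim^\bullet$ edges).

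The second step is where the real work sits, but it is work of a familiar kind. Once $\calN^\bullet$ is in hand, model checking an \EPDL\ formula over it is just model checking a PDL formula with an extra $S5$ epistemic modality over a finite Kripke model. First I would fix $N=|\calN^\bullet|$ and process the subformulas of $\phi$ in order of increasing size, maintaining for each subformula $\psi$ the set $\dbr{\psi}$ of states satisfying it, and for each subprogram $\pi$ occurring in $\phi$ the binary relation $\rel{\pi}\subseteq \S^\bullet\times\S^\bullet$ represented as an $N\times N$ Boolean matrix. The Boolean and epistemic cases are the standard labelling steps. For the program cases I follow Definition~\ref{def.ets} literally: $\rel{a}$ is already given by $\calR^\bullet_a$, a test $\rel{?\psi}$ restricts the identity relation to $\dbr{\psi}$ (already computed since $\psi$ is a strict subformula), composition and union are matrix product and matrix union, and the crucial iteration case $\rel{\pi^*}=(\rel{\pi})^\star$ is computed as a reflexive transitive closure, e.g.\ by Warshall's algorithm in time polynomial in $N$. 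Having $\rel{\pi}$ as a matrix, the modal case $[\pi]\phi$ (equivalently its dual) is again one labelling sweep over the $N$ states.

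The termination and correctness argument is a routine simultaneous induction on the structure of $\phi$ and $\pi$, mirroring the mutual recursion in Definition~\ref{def.ets}; each subformula and subprogram is handled exactly once, and the tests inside programs reference only strictly smaller formulas, so there is no circularity. The complexity count then gives the bound: the number of distinct subformulas and subprograms is linear in $|\phi|$, and each is processed by an operation (matrix union, matrix product, or transitive closure) that costs at most $\mathrm{poly}(N)$, so the whole model checking over $\calN^\bullet$ runs in time polynomial in $N=|\S|\cdot 2^{|\S|-1}$ and $|\phi|$; since $N$ is exponential in $|\calM|$, this is $2^{O(|\calM|)}\cdot \mathrm{poly}(|\phi|)$ overall, which is in \EXPTIME. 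By Corollary~\ref{corollary:ETS_EPDL} this decides the original problem, establishing that model checking \EPDL\ on uncertainty maps is in \EXPTIME.

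The main obstacle, and the only genuinely non-trivial point, is the iteration operator: this is exactly the feature absent from $\EPDLsf$ and the reason the star-free fragment admitted a \PSPACE\ bound while the full language does not obviously. The fixed-point computation of $\rel{\pi^*}$ seems to force us to materialise the entire relation on $\calN^\bullet$ rather than generating witnessing computation sequences one at a time as in the \PSPACE\ algorithm; I would not expect to avoid the exponential-size model here, since the context-dependency of the uncertainty sets is precisely what $\calN^\bullet$ unwinds, and an unbounded iteration may traverse exponentially many distinct $\Gamma$'s. Thus the exponential blow-up appears intrinsic to handling $\pi^*$, and the transitive-closure computation over the full $\calN^\bullet$ is the step I would expect to dominate both the difficulty and the running time.
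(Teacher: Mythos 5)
Your proposal is correct and takes essentially the same route as the paper: both reduce the problem via Corollary~\ref{corollary:ETS_EPDL} to model checking $\phi$ over the exponentially larger ETS model $\calN^\bullet$, construct $\calN^\bullet$ explicitly in exponential time, and then apply a polynomial-time PDL-style global model checking algorithm to it (the paper invokes a known PDL algorithm with bound $O(|\phi|^2\cdot|\calN^\bullet|^3)$ where you spell out the standard labelling-plus-transitive-closure procedure). The resulting exponential bounds coincide, so your argument matches the paper's proof in substance.
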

\begin{proof}
	Given an uncertainty map $\calM=\lr{\N, \U}$, the construction of ETS $\calN^\bullet$ can be done in exponential time in the size of $\N$ due to the fact that there are at most $|\N|$ $a$-successors $t_\Delta$ of each $s_\Gamma$ since $\Delta=\Gamma|^a$.  By modifying the algorithm for \texttt{PDL} in \cite{Lang06}, we can get an algorithm to check \EPDL\ formula $\phi$ on $\calN^\bullet$ w.r.t.\ $\Vdash$, and its time complexity is $O(|\phi|^2\cdot |\calN^\bullet|^3)$. Thus, the time complexity of model checking $\phi$ on $\calM$ is bounded by $O(|\phi|^2\cdot |\S_\N|^3\cdot 2^{3|\S_\N|-3})$.
\end{proof}

We conjecture that the model checking problem of full \EPDL\ is \EXPTIME-complete, and leave the lower bound to the extended version of this paper.

\subsection{Complexity of conformant planning}

In the rest of this section, let us look at the complexity of conformant planning in terms of \EPDL\ model checking. Although the model checking problem of full \EPDL\ is likely to be \EXPTIME-complete, the complexity of model checking the \EPDL\ formula which encodes the conformant planning problem (cf. Theorem~\ref{theo:conformant_planning})
is in \PSPACE\ if the goal formula is program-free. More precisely, we can show the following:

\begin{theorem}
The problem of model checking \EPDL\ formulas in the shape of $\lr{(\Sigma_{a\in\B}(?K\lr{a}\top;a))^*}K\phi$, where $\phi$ is an epistemic formula (i.e. program-free) and $\B\subseteq \tA$, is in $\PSPACE$. \label{thm.mccp}
\end{theorem}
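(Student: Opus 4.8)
The plan is to reduce the question to a reachability problem in the space of \emph{belief states} (uncertainty sets) and then to solve that reachability problem in nondeterministic polynomial space, invoking Savitch's theorem to conclude membership in \PSPACE. Let $\N$ be the Kripke model underlying $\M$ (i.e.\ $\M$ with its uncertainty set forgotten), so that $\lr{\N,W}$ is a genuine uncertainty map for every nonempty $W\subseteq\S_\M$. By Theorem~\ref{theo:conformant_planning}, $\calM,s\vDash\lr{(\Sigma_{a\in\B}(?K\lr{a}\top;a))^*}K\phi$ holds iff there is a sequence $\sigma=a_1\cdots a_n\in\L(\B^*)$ with $\calM,s\vDash K\lrp{a_1}\cdots\lrp{a_n}\phi$. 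Unwinding the guarded-iteration semantics, such a $\sigma$ is exactly a path in the following transition system: the nodes are the nonempty $W\subseteq\S_\M$; there is an edge $W\to W|^a$ for each $a\in\B$ such that $a$ is executable from every state of $W$ (where $W|^a=\{r'\mid\exists r\in W,\ r\rel{a}r'\}$), which is precisely the guard $K\lr{a}\top$ evaluated at the belief state $W$ and which also guarantees $W|^a\neq\emptyset$; the start node is $\U_\M$; and a node $W$ is \emph{accepting} iff $\lr{\N,W}\vDash K\phi$.

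The crucial observation keeping the per-node bookkeeping small is that, since the goal $\phi$ is program-free, its truth value at a state depends only on the current uncertainty set and not on the history of updates: no action modality occurs in $\phi$, so $\lr{\N,W},v\vDash\phi$ is just ordinary (context-free) epistemic model checking of $\phi$ on the finite model $\N$ with uncertainty set $W$. Consequently each node $W$ is representable in $O(|\S_\M|)$ bits, the edge test (executability of $a$ from all of $W$ together with the computation of $W|^a$) is polynomial, and the accepting test (evaluating $K\phi$ on $\lr{\N,W}$) is polynomial by standard epistemic model checking. This is exactly where the explicit, context-dependent semantics of Section~\ref{sec:epdl_minus} earns its keep: it justifies that the intermediate epistemic subformulas are evaluated against the belief state $W=\U_\M|^{\sigma}$, but because $\phi$ carries no further programs the evaluation localizes to the single belief state $W$.

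With this setup the decision procedure is nondeterministic reachability. Since the belief-state graph has at most $2^{|\S_\M|}$ nodes, any witnessing path may be taken to have length at most $2^{|\S_\M|}$ (a longer path repeats a belief state and can be shortened). The algorithm therefore stores only the current belief state $W$ (initially $\U_\M$) together with a binary step counter bounded by $2^{|\S_\M|}$; at each step it guesses an $a\in\B$, checks the guard, replaces $W$ by $W|^a$, and accepts as soon as some visited $W$ is accepting, rejecting once the counter is exhausted. All stored data fit in $O(|\S_\M|)$ space, so the problem is in \NPSPACE, and by Savitch's theorem it is in \PSPACE.

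The main obstacle, and the part deserving the most care, is establishing the faithful correspondence between the iteration $(\Sigma_{a\in\B}(?K\lr{a}\top;a))^*$ and paths in the belief-state graph: one must verify that each guarded step $?K\lr{a}\top;a$ both preserves nonemptiness of the uncertainty set and matches exactly one edge $W\to W|^a$, using the update clause $\U|^a=\{r'\mid\exists r\in\U,\ r\rel{a}r'\}$. Once this bijection between guarded action sequences and belief-state paths is pinned down, and the program-freeness of $\phi$ is used to localize the accepting test to a single belief state, the \PSPACE\ bound follows from the reachability argument above.
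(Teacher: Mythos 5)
Your proposal is correct and follows essentially the same route as the paper: both reduce the problem to reachability over the exponentially large graph of belief states (the paper packages this as the ETS model $\N^\circ$ with the guard built into $\calR^\circ_a$, you as a transition system on nonempty $W\subseteq\S_\M$), both exploit program-freeness of $\phi$ to check the accepting condition $K\phi$ locally in polynomial space, and both conclude via Savitch. The only cosmetic differences are that you state a nondeterministic guess-a-path algorithm and invoke Savitch's theorem as the class collapse $\NPSPACE\subseteq\PSPACE$, whereas the paper directly runs the bisection procedure underlying Savitch's proof, and that you drop the designated-state component (harmless, since the guards guarantee the designated state can always follow the belief-state path and $K\phi$ does not depend on it).
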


\begin{proof}
(Sketch)
Note that $(\sum_{a\in\tB}(?K\lr{a}\top;a))^*$ is a special program which has only simple epistemic tests depending  on the structure of the underlying Kripke model. Now given a Kripke model $\N$ and a set $\B\subseteq \tA$ we can define an ETS model $\N^\circ$ similar to $\N^\bullet$ but with a different definition for the action relations:
$$ 	\calR^\circ_a=\{(s_\Gamma,t_\Delta)\mid s\rel{a}t,\Delta=\Gamma|^a, \forall u\in\Gamma \exists v\ st.\ u\rel{a}v.\}$$

Note that the extra condition guarantees that the action $a$ is always executable w.r.t.\ the whole  $\Gamma$, thus fulfilling the test $?K\lr{a}\top$. Now we can have an analog of Corollary \ref{corollary:ETS_EPDL}, and reduce the problem of checking $\lr{\N,\U},s\vDash(\sum_{a\in\tB}(?K\lr{a}\top;a))^*K\phi$ to the reachability problem in $\N^\circ$: whether there is a path from $s_\U$ in $\N^\circ$ such that it can reach a state $t_{\U'}$ where $K\phi$ holds. Since $\phi$ is $[\pi]$-free, we can check it easily given $\U'$ using polynomial space, thus the main task is to find the reachable $t_{\U'}$. Note that, in the size of $\N$, there are exponentially many such $t_{\U'}$ and the maximal length of the plan is also exponential. However, we do not need to build the whole $\N^\circ$ and the bisection-like algorithm behind the proof of Savitch's Theorem will do the job.\footnote{A similar algorithm was used to pinpoint complexity of the conformant planning in AI, cf.\cite{KT2005}.} More precisely, we first pick up a $t_{\U'}$, and then run the recursive bisection method to see whether $t_{\U'}$ is reachable from $s_\U$ within $2^{|\N|}$ steps. The depth of the recursion is bounded by $log_2(2^{|\N|})=|\N|$ and at each recursion we need to record the choice of the state which can be encoded by a $(0,1)$-vector using $log_2(2^{|\N|})=|\N|$ space (plus one bit to record the result). Moreover, at the bottom of the recursion we only need to verify one step reachability, i.e., whether two states in $\N^\circ$ are linked by $\calR^\circ_a$,  without building the whole $\N^\circ$. Thus the whole procedure of model checking can be done using polynomial space.
\end{proof}

As we mentioned in the introduction, the conformant planning problems in the AI literature are usually given by using state variables and actions with preconditions and (conditional) effects, rather than explicit transition systems. The corresponding  explicit transition system can be generated by taking all the possible valuations of the state variables as the state space (an exponential blow up), and computing the transitions among the valuations according to the preconditions and the  postconditions of the actions. In terms of the size of explicit transition systems, our above result is  consistent with the $\EXPSPACE$ complexity result in the AI literature for conformant planning with Boolean and modal goals  \cite{KT2005,bonet2010conformant}. Actually,  the complexity result of Theorem~\ref{thm.mccp} can be strengthened to \PSPACE-complete based on the corresponding complexity result in the AI literature.

 However, not all the transition systems can be generated in this way since the preconditions and postconditions are (usually) purely propositional and thus two states that share the same valuation must have the same executable actions. In an arbitrary transition system,  multiple states with the same valuation may have different available actions due to some underlying protocol or other (external) factors not modelled by basic propositions.

 \label{sec.cp}

%
%
%

\section{Conclusions and future work}

\label{sec.conc}

In this work we first introduce the logical language \EAL\ over uncertainty maps and axiomatize it completely. \EAL\ is then extended to \EPDL\ with programs to specify conformant and conditional plans. We show that the conformant planning problems can be reduced to model checking problems of \EPDL. Finally we showed that model checking star-free \EPDL\ over uncertainty maps is \PSPACE-complete\ and model checking the full fragment is in \EXPTIME. On the other hand, model checking the conformant planning problem is in \PSPACE.

Note that our \EPDL\ is a powerful language which can already express conditional plans, for example, $(?p;a+?\neg p;b);c$. This suggests that we can use the very \EPDL\ language (\EPDL$^-$ is enough) to \textit{verify} plans in contingent planning w.r.t.\ a variant of the semantics which can handle feedbacks during the execution. In fact, observational power about the availability of the actions has been  already incorporated in \cite{wang2012}, which can be extended to general feedbacks discussed in the literature of contingent planning (cf.\ e.g., \cite{BonetG12a}). On the other hand, to check the \textit{existence} of a conditional plan, we are not sure whether \EPDL\ is expressive enough, as subtleties may arise as in the case of conformant planning. We leave the contingent planning to future work.

Another natural extension is to go probabilistic, and reduce the probabilistic planning over MDP to some model checking problem of the probabilistic version of our \EPDL. Our ultimate goal is to cast all the standard AI planning problems into one unified logical framework in order to facilitate careful comparison and categorization. We will then see clearly how the form of the goal formula, the constructor of the plan, and the observational ability matter in the theoretical and practical complexity of planning, in line with the research pioneered in \cite{BackstromJ11}.

\paragraph{Acknowledgement} Quan Yu is supported by NSF Grant No.61463044 and Grant No.[2014]7421 from the Joint Fund of the NSF of Guizhou province of China. Yanjun Li thanks the support from China Scholarship Council. Yanjing Wang acknowledges the support from ROCS of SRF by Education Ministry of China and the NSSF major project  12\&ZD119.

\bibliographystyle{eptcs}

\bibliography{MCEPDL}

\appendix
\section{Algorithms for EPDL$^-$}
\begin{definition}[Matrix representation]
	Let $\tB_{n\times m}$ denote a (0,1)-matrix of size $n\times m$. 
	A matrix $\tB_{n\times 1}$, or $\tB_n$ for short, is called a vector.
	Given finite uncertainty map $\calM$, its domain $\calS$ can be linearly ordered as $\{s_1,\cdots,s_n\}$. 
	Thus $\calM$ can be represented by a set $\{\tB^{\hspace{0.3mm}a}_{n\times n}\mid a\in\tA \}$ of adjacency matrices for accessibility relation, a vector $\tB^\calU_n$ for $\calU$ and a set $\{\tB^{\hspace{0.3mm}p}_n\mid p\in\tP\}$ of vectors for atomic propositions.
\end{definition}
\begin{definition}
	Given (0,1)-matrices $\tB'_{n\times k},\tB_{k\times m}$, 
	their product $\tB''_{n\times m}$ is defined as: $\tB''_{n\times m}[i,j]=1$ iff there exists $r\leq n$ such that $\tB'_{n\times k}[i,r]=\tB_{k\times m}[r,j]=1$ for all $1\leq i\leq n,1\leq j\leq m$.
\end{definition}
The following algorithms are to check whether $\phi$ holds on a pointed uncertainty map $\calM,s$ by Definition \ref{semanticsth}. The main algorithm (Algorithm \ref{MCAG})  recursively calls itself for each non-trivial subformula of $\phi$. The complex cases are for the subformulas in the form of $\lr{\pi}\phi$ and $K\phi$. By Definition~\ref{semanticsth}, to check $\calM,s\Vdash_\sigma\lr{\pi}\phi$, we need to make sure that there exists a sequence $\omega\in\L(\pi)$ and a state $t\in\calS$ such that $s\rel{\omega_\sigma}t$ and $\calM,t\Vdash_{\sigma r(\omega)}\phi$. Since $\pi$ is star-free, $|\omega|\leq |\pi|$ for each $\omega\in\L(\pi)$. It is clear that we cannot compute and store the whole set of $\L(\pi)$ within polynomial space. Instead, \textit{one by one} we generate all the possible sequences that are shorter than $|\pi|$ and are formed from the alphabet of $\pi$  (cf.\ line 14), and check whether they are in $\L(\pi)$. We can order the possible sequences lexicographically according to an ordering of the basic actions and tests in $Sig$, and compute the next sequence  merely from the current one using function \emph{next}. \emph{memb\_chec($\omega,\pi$)} checks whether it is the case $\omega\in\L(\pi)$. If $\omega\in \L(\pi)$, we need to check whether there exists $s_j\in\calS_\calM$ such that $s\rel{\omega_\sigma}s_j$ (Algorithm~\ref{CPW}) and $\calM,s_j\Vdash_{\sigma r(\omega)}\phi$, where $r(\omega)$ is the test-free subsequence of $\omega$ which is easy to compute.
For the case of $K\phi$, we need to calculate the state set $\calU|^\sigma$ (Algorithm~\ref{CCS}).

\section{Complexity analysis}
We suppose $|\S_\M|=n$ and $|\phi|=k$. Algorithm \ref{CCS} uses one variable $A$ to record the uncertainty set which requires $O(n)$ space.
 Note that there is a mutual recursion in Algorithm \ref{CPW} and \ref{MCAG}, but the depth of the overall recursion is bounded by $k$. In Algorithm~\ref{CPW}, the variable consuming the most of the space is the matrix $\tB_{n\times n}$ recording the (intermediate) relation. Since $\sigma$ and $\omega$ are also variables in the main algorithm and $|\omega|+|\sigma|\leq k$ due to the construction in Algorithm~\ref{MCAG}, the space usage of Algorithm~\ref{CPW} before the recursive calls of $PW$ and $MC$ is bounded by $O(k+n^2)$. 
For Algorithm~\ref{MCAG}, the most space-demanding part is the $\lr{\pi}\phi$ case, where we need to store $\pi$, $Sig$, and keep track one $\omega$ and one state $s$ in the loop, which are bounded by either $k$ or  $s$. Moreover, according to \cite{JR91}, the complexity of \emph{memb\_chec} is \textsc{NLOGspace}-complete in the size of $Sig$, i.e., the alphabet of $\pi$ which is bounded again by $k$. Thus before calling $MC$ and $PW$ again in the $\lr{\pi}\phi$ case, the space requirement is at most linear in both $k$ and $n$, which is less demanding than $PW$ for each recursion. Recall that the overall recursion depth of $MC$ (and $PW$) is bounded by $k$ thus the space usage of the whole algorithm is bounded by $O(k(k+n^2))=O(k^2+kn^2)$.

 \RestyleAlgo{ruled}\LinesNumbered
\begin{algorithm}\rm
	\SetAlgoVlined
	\SetKwData{Left}{left}\SetKwData{This}{this}\SetKwData{Up}{up}
	\SetKwFunction{Union}{Union}\SetKwFunction{FindCompress}{FindCompress}
	\SetKwInOut{Input}{input}\SetKwInOut{Output}{output}
	\Input{$\calU$, $\sigma$}
	\Output{$\tB^{\calU|^{\sigma}}_{n}$}
	\BlankLine
	$A \leftarrow \tB^{\U}_{n}$\;
	\For {$i\leftarrow 1$ \KwTo $|\sigma|$}
	{
		$A \leftarrow A \times \tB^{\sigma[i]}_{n\times n}$\;
	}
	
	\Return $A$\;
	\caption{Function CNU$(\calU,\sigma)$: Calculate the the new uncertainty set $\calU|^\sigma$}\label{CCS}
\end{algorithm}

\RestyleAlgo{ruled}\LinesNumbered
\begin{algorithm}\rm
	\SetAlgoVlined
	\SetKwData{Left}{left}\SetKwData{This}{this}\SetKwData{Up}{up}
	\SetKwFunction{Union}{Union}\SetKwFunction{FindCompress}{FindCompress}
	\SetKwInOut{Input}{input}\SetKwInOut{Output}{output}
	\Input{computation sequence $\omega$, action sequence ${\sigma}$}
	\Output{$\tB_{n\times n}$}
	\BlankLine
	\Switch{$\omega_{\sigma}$}{
		\lCase{$\epsilon_{\sigma}$}
		{			
			\Return $Matrix(\{(s,s)\mid s\in\calS\})$ \tcc*[f]{$Matrix(R)$ is the $(0,1)$-matrix representation of the binary relation $R$}
		}
		\lCase{$(?\phi\omega')_{\sigma}$}
		{
			\Return $Matrix(\{(s,s)\mid$ MC$(\calM, s, \sigma, \phi) = $ true $\})\times$ PW($\omega',\sigma$)
		}
		\lCase{$(a\omega')_{\sigma}$}
		{
			\Return $\tB^a_{n\times n}\times$ PW($\omega',{\sigma a}$)
		}
	}
	\caption{Function $PW(\omega,{\sigma})$: Calculate the binary relation $\rel{\omega_{\sigma}}$}\label{CPW}
	
\end{algorithm}
\RestyleAlgo{ruled}\LinesNumbered
\begin{algorithm}\rm
	\SetKw{Not}{not}
	\SetKw{And}{and}
	\SetAlgoVlined
	\SetKwData{Left}{left}\SetKwData{This}{this}\SetKwData{Up}{up}
	\SetKwFunction{Union}{Union}\SetKwFunction{FindCompress}{FindCompress}
	\SetKwInOut{Input}{input}\SetKwInOut{Output}{output}
	\Input{The pointed uncertainty map $(\calM, s)$, sequence of actions $\sigma$, $\phi\in \tEPDL^{-}$.}
	\Output{true if $\calM, s \Vdash_\sigma \phi$.}
	\BlankLine
	\Switch{$\phi$}{

		\uCase{$\langle \pi \rangle \varphi$}
		{
			Let $Sig$ be the array consisting of atomic programs and formulas in $\pi$ ordered according to their first appearances\;
			$\omega\leftarrow Sig[1]$ \tcc*[f]{$\omega$ is the candidate sequence we want to test}\;
			\While{$|\omega|\leq|\pi|$}{
						\If{memb\_chec($\omega,\pi$)}{
							\For{i = 1 \KwTo $\calS_{\M}$}
							{
								\If{
									$(s,s_i)\in PW(\omega,\sigma)$
									 }
								{				
									\lIf{
										MC$(\calM,s_{j},\sigma r(\omega),\varphi)$
										}{
										\Return true
										}
									
								}
							}
							}	
					$\omega\leftarrow$ \textit{next}($\omega, Sig$) \tcc*[f]{calculate the next sequence lexicographically according to the order $Sig$}\;

			}
			\Return false\;
		}
		\uCase{$K\varphi$}
		{
			$\tB^{\calU|^{\sigma}}_{n}$ = CNU$(\calU,\sigma)$ \tcc*[f]{calculate the vector representation of $\calU|^{\sigma}$}
			
			\For{m = 1 \KwTo $|\calS_{\calM}|$}
			{
				\lIf{$(\tB^{\hspace*{0.5mm}\calU|^{\hspace*{0.5mm}\sigma}}_{n})_m = 1$ {\bf and} MC$(\calM, s_{m},\sigma,\varphi) = $ false}
				{
					\Return false
				}
			}
			
			\Return true\;
		}
	}
	
	\caption{Function MC$(\calM,s,\sigma,\phi)$: Model checking algorithm for $\tEPDL^{-}$ (Boolean cases omitted)}\label{MCAG}
	
\end{algorithm}

\end{document}